
\documentclass[10pt]{article} 

\usepackage[accepted]{rlj}           

%
%

\usepackage{amssymb}            
\usepackage{mathtools}          
\usepackage{mathrsfs}           
\usepackage{graphicx}           
\usepackage{subcaption}         
\usepackage[space]{grffile}     
\usepackage{url}                
\usepackage{lipsum}             

\usepackage{amsthm}
\newtheorem{theorem}{Theorem}
\newtheorem{definition}[theorem]{Definition}
\newtheorem{proposition}[theorem]{Proposition}

\usepackage[ruled,linesnumbered]{algorithm2e}
\usepackage{booktabs}

\newcommand{\bfM}{\mathbf{M}}
\newcommand{\bbM}{\mathbb{M}}
\newcommand{\tbfM}{\widetilde\bfM}
\newcommand{\tbbM}{\widetilde\bbM}


\title{The Confusing Instance Principle for Online Linear Quadratic Control}

\setrunningtitle{Confusing Instance Principle for Linear Quadratic Control}


\author{Waris Radji\textsuperscript{1}, Odalric-Ambrym Maillard\textsuperscript{1}}


\emails{\{waris.radji,odalric.maillard\}@inria.fr}

\affiliations{
$^{1}$\textbf{Univ. Lille, Inria, CNRS, Centrale Lille, UMR 9198-CRIStAL, F-59000 Lille, France}\\
}

\contribution{
    We formulate the Confusing Instance (CI) principle as an optimization problem in the LQR setting, extending this concept beyond MABs and discrete MDPs for the first time.
    }
    {
    The CI principle has previously been applied only in discrete settings, primarily in multi-armed bandits and tabular MDPs \citep{honda2010asymptotically,honda2015non,pesquerel2022imed,balagopalan2024minimum}.
    }

\contribution{
    We develop \texttt{MED-LQ}, a novel control strategy that implements the Minimum Empirical Divergence (MED) framework for online LQR, and show his numerical competitiveness.
    }
    {
    Prior work established MED algorithms in discrete MDPs settings, with \texttt{IMED-RL} for ergodic case \citep{pesquerel2022imed} and \texttt{IMED-KD} for the communicating case \citep{saber2024logarithmic}.
    }

\contribution{
    We develop a novel computational approach for building confusing instances in continuous systems through sensitivity analysis of rank-one perturbations.   
    }
    {
    Prior work limited confusing instances to discrete settings and linear bandits. Our sensitivity analysis for continuous control systems represents the first extension of this principle to linear dynamical systems.
    }


\contribution{
We introduce \texttt{linquax}, a library for efficient research in online LQR problems, built with JAX to leverage automatic differentiation and provide GPU/TPU compatibility.
}
{
Prior to our work, no \emph{modern} open-source library existed specifically for online LQR, creating a significant barrier to reproducible research in this domain.
}

\keywords{Model-based, linear quadratic regulator, exploration, minimum empirical divergence.} 

\summary{
We revisit the problem of controlling linear systems with quadratic cost under unknown dynamics within model-based reinforcement learning. Traditional methods like Optimism in the Face of Uncertainty and Thompson Sampling, rooted in multi-armed bandits (MABs), face practical limitations. In contrast, we propose an alternative based on the \textit{Confusing Instance} (CI) principle, which underpins regret lower bounds in MABs and discrete Markov Decision Processes (MDPs) and is central to the \textit{Minimum Empirical Divergence} (MED) family of algorithms, known for their asymptotic optimality in various settings. By leveraging the structure of LQR policies along with sensitivity and stability analysis, we develop \texttt{MED-LQ}. This novel control strategy extends CI and MED principles beyond small-scale settings.

Our work addresses a crucial research gap by exploring whether the CI principle can improve exploration strategies in continuous MDPs. While the exploration-exploitation dilemma is well understood in discrete settings, the curse of dimensionality makes this challenge significantly harder in continuous spaces. \texttt{MED-LQ} overcomes these challenges by efficiently searching for confusing instances through rank-one and entry-wise perturbations while avoiding intractable confidence bounds. Benchmarks on a comprehensive control suite demonstrate that \texttt{MED-LQ} achieves competitive performance across various scenarios, establishing foundations for a fresh perspective on exploration in continuous MDPs and opening new avenues for structured exploration in complex control problems.
}

\begin{document}

\maketitle  

\begin{abstract}
We revisit the problem of controlling linear systems with quadratic cost under unknown dynamics with model-based reinforcement learning. Traditional methods like Optimism in the Face of Uncertainty and Thompson Sampling, rooted in multi-armed bandits (MABs), face practical limitations. In contrast, we propose an alternative based on the \textit{Confusing Instance} (CI) principle, which underpins regret lower bounds in MABs and discrete Markov Decision Processes (MDPs) and is central to the \textit{Minimum Empirical Divergence} (MED) family of algorithms, known for their asymptotic optimality in various settings. By leveraging the structure of LQR policies along with sensitivity and stability analysis, we develop \texttt{MED-LQ}. This novel control strategy extends the principles of CI and MED beyond small-scale settings. Our benchmarks on a comprehensive control suite demonstrate that \texttt{MED-LQ} achieves competitive performance in various scenarios while highlighting its potential for broader applications in large-scale MDPs.
\end{abstract}



\section{Introduction}

In Reinforcement Learning (RL), the exploration-exploitation dilemma is well understood in small-scale settings like multi-armed bandits (MABs) and discrete Markov Decision Processes (MDPs), for which strong theoretical guarantees exist. The curse of dimensionality impacts this dilemma in continuous or high-dimensional spaces, where analyzing this trade-off becomes significantly harder, and traditional exploration strategies struggle to scale. This is evident in deep RL, which, despite its empirical success, e.g. \cite{osband2016deep,bellemare2016unifying,burda2018exploration,sekar2020planning,ladosz2022exploration}, often lacks theoretical foundations.  In this work, we study the exploration-exploitation dilemma in the online \emph{Linear Quadratic Regulator} (LQR) problem where dynamics are \textit{unknown}, in the same setting of \cite{abbasi2011regret}. Widely used in control applications such as robotics and autonomous systems, LQR enables explicit analysis in continuous, structured MDPs \citep{cohen2018online,tu2018least,tu2019gap,maran2025local}.
\paragraph{Research gap.}  Traditional exploration strategies, such as \emph{Optimism in the Face of Uncertainty} (OFU), have been widely applied to LQR and beyond, providing upper regret bounds that evaluate the worst-case performance of a learner, typically scaling as $\widetilde O(\sqrt T)$, but suffer from inherent limitations \citep{lattimore2017end}. On the other hand, lower regret bounds establish fundamental performance limits for any learner on a given problem instance. A key tool in deriving these bounds is the \emph{Confusing Instance} (CI) principle, which constructs hard-to-distinguish problem instances that directly appear in regret lower bound analysis. The \emph{Minimum Empirical Divergence} (MED) family of algorithms is explicitly designed to match these regret lower bounds, leveraging the CI principle to guide exploration efficiently. MED-based methods achieve asymptotic and instance-dependent optimality, often outperforming numerically OFU-based approaches in various settings. Although characterizing regret lower bounds beyond discrete MDPs remains an open research problem and not in the scope of this work, we provide empirical evidence to address the following question,
\begin{center}
\emph{Can the Confusing Instance principle improve exploration strategies in continuous MDPs?}
\end{center}
To the best of our knowledge, this paper is the first to explore the potential of the CI principle in continuous MDPs, through the online LQR setting as an entry point which presents both simplifications and challenges. This work paves the way for novel exploration strategies in large spaces.

\paragraph{From MABs to large MDPs.} RL exploration strategies generally follow a similar evolution. Initially, an idea emerges in discrete MABs. This idea is then extended to linear bandits in parallel with discrete MDPs. The concepts are then applied to continuous MDPs, typically in the LQR setting. Finally, heuristics are developed to tackle high-dimensional problems in deep RL. The evolution of the OFU principle begins with the Upper Confidence Bounds (\texttt{UCB}) algorithm in MABs \citep{auer2002finite}, followed by \texttt{OFUL} in the linear case \citep{abbasi2011improved}. It then extends to \texttt{UCRL} in discrete MDPs \citep{auer2006logarithmic,auer2008near,bourel2020tightening}, \texttt{OFULQ} in LQR \citep{abbasi2011regret,abeille2020efficient,lale2022reinforcement,mete2022augmented}, and finally, in deep RL \citep{bellemare2016unifying,curi2020efficient}. Thompson Sampling (\texttt{TS}) emerged as a more efficient alternative to the OFU principle, relying implicitly on confidence bounds, allowing for analysis similar to OFU. It started with MABs \citep{thompson1933likelihood,kaufmann2012thompson}, then extended to linear MABs with \texttt{LinTS} \citep{agrawal2013thompson,abeille2017linear}, discrete MDPs with \texttt{PSRL} \citep{osband2013more,osband2017posterior}, in LQR \citep{abeille2017thompson,abeille2018improved,pmlr-v178-kargin22a}, and finally to deep RL through Bayesian or ensemble neural networks \citep{osband2016deep,azizzadenesheli2018efficient}. The MED principle\footnote{\cite{baudry2023general} shows that \texttt{MED} and \texttt{TS} can be analyzed following a common methodology.} has seen more recent developments, with its foundation rooted in the regret lower bounds introduced by \cite{lai1985asymptotically} and \cite{burnetas1996optimal,burnetas1997optimal}. First proposed by \cite{honda2010asymptotically,honda2011asymptotically}, the MED principle has been applied to various MABs settings \citep{honda2015non,saber2021indexed,pesquerel2021stochastic,bian2022maillard,saber2024bandits}, and to linear MABs \citep{bian2024indexed,balagopalan2024minimum}. In discrete MDPs, \texttt{IMED-RL} \citep{pesquerel2022imed} emerges as a state-of-the-art algorithm under ergodic assumptions. In communicating MDPs, novel promising strategies explore the MED principle but face the NP-hard challenge of finding CIs \citep{saber2024logarithmic,boone2025regret}. In our paper, we propose to continue the evolution of MED by extending it beyond MABs and discrete MDPs.

\paragraph{Outline and contributions.} Our paper makes several key contributions to RL for unknown LQ systems. After formalizing the problem setup in Section \ref{section-2}, we present a novel formulation of CIs as an optimization problem in Section \ref{section-3}, developing an efficient solution method specifically engineered for LQR. Section \ref{section-4} introduces our main algorithmic contribution, \texttt{MED-LQ}, which leverages these CIs to enable principled exploration while maintaining computational tractability through careful sensitivity and stability analysis. In Section \ref{section-5}, we present comprehensive empirical evaluations across both classical control benchmarks and industrial applications, demonstrating that \texttt{MED-LQ} matches state-of-the-art performance while overcoming the practical limitations of OFU approaches. Our work bridges an important gap between theoretical optimality and practical implementation in continuous control settings, with broader implications for exploration in large-scale MDPs.







\section{Setup and Background material} \label{section-2}

\paragraph{The optimal control problem.} 
Consider a linear time-invariant system written in state-space form, where the state $x_t\in\mathbb R^d$ evolves according to the discrete-time dynamics
\citep{bertsekas2012dynamic}
\begin{equation} \label{dynamics}
    x_{t+1} = A x_t + B u_t + w_t,
\end{equation}
upon receiving control $u_t\in\mathbb R^k$,
where the system matrices $A \in \mathbb R^{d \times d}$ and $B \in \mathbb R^{d \times k}$ govern  the dynamics of the system, and \( w_t \sim \mathcal{N}(0, \Omega) \) represents an i.i.d. centered Gaussian noise with known covariance $\Omega$. We further assume that $\Omega = \sigma_w^2I_d$.
The quadratic cost associated to this control is $c(x_t , u_t) = x_t^\intercal Q x_t + u_t^\intercal R u_t,$
where \( Q  \in \mathbb R^{d \times d} \) and \( R  \in \mathbb R^{k \times k} \) are positive definite matrices.
For the rest of the paper, we summarize the system's unknown parameters in $\Theta = (A, B)^\intercal$.  
The infinite horizon average cost function for
a policy $\pi$ specifying the control $u$  in each state $x$
is 
\begin{equation}\label{eq:cost}
    J_\pi(\Theta) = 
    \lim_{T \rightarrow \infty} \frac{1}{T}
    {\mathbb E} \left [ \sum_{t=0}^{T-1} c(x_t, u_t) \right].
\end{equation}
Further, a policy $\pi$ is classically parameterized by a \textit{gain} matrix $K \in \mathbb R^{k \times d}$ as  \( \pi(x_t) = -K x_t \), making it a linear function of the state, with associated cost \eqref{eq:cost}
denoted $J_K(\Theta)$.
Optimal planning can be achieved by solving the Discrete Algebraic Ricatti Equation (DARE), $P = A^\intercal P A + Q - A^\intercal P B \left (B^\intercal P B + R \right)^{-1} B^\intercal P A.
$
We denote the solution of the DARE, $P^\star(\Theta)$, and
the optimal gain that minimizes Eq.
\eqref{eq:cost} is given as 
$
    K^\star(\Theta) = - \left( B^\intercal P^\star(\Theta)B + R\right)^{-1}B^\intercal P^\star(\Theta) A,
$ which achieves the minimal cost $J^\star(\Theta)=J_{K^\star(\Theta)}(\Theta)$. 
When $\Theta$ is clear from context, we simply write $P^\star$, $K^\star, J^\star$.

\paragraph{The learning problem.} We follow the model-based RL setting of \cite{abbasi2011regret}, where parameter $\Theta^\star$ is unknown and $Q$ and $R$ are assumed known. We assume that the system is part of the \textit{stabilizable} set $\mathcal{S}_0$, meaning there exists a gain matrix $K$ such that $A - BK$  is stable, that is with all eigenvalues confined to the interval $(-1, 1)$. It is convenient to introduce the constraint set $\mathcal S \subseteq \mathcal{S}_0 = \{\Theta \in \mathbb{R}^{(k+d)\times d}: J^\star(\Theta) \leq D, \operatorname{Tr}(\Theta\Theta^\intercal) \leq S^2 \}$. At each time $t$ the learner chooses a policy $\pi_t$, observes the current state $x_t$, executes a control $u_t = \pi_t(x_t)$ and incurs the associated cost $c_t = x_t^\intercal Q x_t + u_t^\intercal R u_t$; the system then transitions to the next state $x_{t+1}$. The learning performance is measured by the cumulative regret over $T$ steps defined as 
$
    \mathcal{R}(T) = \sum_{t=0}^T(c_t - J^\star(\Theta^\star)).
$
The unknown parameter $\Theta^\star$ can be directly estimated from sequences $\{x_t, u_t, x_{t+1}\}$ using regularized least-squares (RLS). Let $z_t = (x_t, u_t)^\intercal$, for any regularization paramameter $\lambda \in \mathbb R^+$, the design matrix
and the RLS estimate are defined as
\noindent\begin{minipage}{.5\linewidth}
\begin{equation}
  V_t = \lambda I+ \sum^{t-1}_{s=0}z_sz_s^\intercal,
\end{equation}
\end{minipage}%
\begin{minipage}{.5\linewidth}
\begin{equation} \label{eq:rls}
  \widehat{\Theta}_t = V_t \sum^{t-1}_{s=0}z_sx^\intercal_{s+1}.
\end{equation}
\end{minipage}
Using Theorem 1 from \cite{abbasi2011regret}, for any $\theta \in (0, 1)$, for all $0 \leq t \leq T$, the underlying parameter $\Theta^\star$ lives in the ellipsoid $\mathcal{E}_t(\delta)$ with probability at least $1 - \delta$ where 
$
\mathcal{E}_t(\delta) = \left \{ \Theta^\star \in \mathcal S : \Vert \Theta^\star - \widehat{\Theta}_t \Vert_{V_t} \leq \beta_t(\delta) \right\}, \text{ with }
   \beta_t(\delta) = n \sigma_w \sqrt{2 \log \left( \dfrac{\operatorname{det}(V_t)^{1/2}}{\operatorname{det}(\lambda I)^{1/2}\delta} \right)} + \lambda^{1/2}S.
$
\paragraph{Policy evaluation.} From the form of the policies, it is convenient to introduce $A_K = A - B K$, known as the closed-loop system of $K$. Indeed using this notation, transitions under policy $K$ rewrite $x_{t+1} = A_Kx_t + w_t$, and the discrete-time Bellman equation writes
$
P_K(\Theta) = Q_K + A_K^\intercal P_K(\Theta) A_K,
$
where $Q_K  = Q + K^\intercal R K$ and $P_K(\Theta)$ is the solution to a discrete-time Lyapunov equation. 
We denote the \textit{spectral radius} of a matrix $M$ as $\rho(M)$. If $K$ stabilizes the system, then $\rho(A_K)  < 1$, the cost of $K$ is finite, and $x_t \rightarrow 0$ at a geometric rate.
Under the objective \eqref{eq:cost}, for a gain $K$, a better gain $K'$ ensures
$ J_{K'}(\Theta) \leq J_{K}(\Theta),
$
with $J_{K}(\Theta) = \sigma_w^2 \operatorname{Tr}(P_K(\Theta))$, the average cost of $K$ in $\Theta$.

\paragraph{Optimal MABs strategies.} 
The Minimum Empirical Divergence (\texttt{MED}) algorithm, introduced by \citet{honda2010asymptotically}, achieves asymptotic optimality for MABs. \texttt{MED} derives directly from the fundamental regret lower bound established by \citet{burnetas1996optimal}, which states that for any suboptimal arm \(a \in \mathcal{A}\) (where \(\mu_a < \mu_\star\), with \(\mu_\star\) being the optimal mean), the expected number of pulls \(N_a(T)\) must satisfy:
$
\underset{T\to\infty}{\liminf} \ \mathbb{E}[N_a(T)] \big/{\log T} \geq 1\big/\mathcal{K}_a(b_a,\mu_\star).
$
Here, \(b_a \in \mathcal{D}_a\) represents the reward distribution of arm \(a\), and \(\mathcal{K}_a(b_a,\mu_\star)\) captures the minimum information cost needed to confuse the algorithm between arm \(a\) and a better arm. This is formalized as
$
\mathcal{K}_a(b_a,\mu_\star) = \inf \left\{ \operatorname{KL}(b_a \Vert b) : b \in \mathcal{D}_a,\ \mathbb{E}_{X\sim b}[X] > \mu_\star \right\},
$ where $\operatorname{KL}$ denotes the Kullback-Leibler divergence.
At each time step $t$, \texttt{MED} elegantly transforms this information-theoretic principle into an exploration strategy by sampling arm $a$ with probability proportional to $\exp(-N_a(t)\mathcal{K}_a(\hat{b}_a, \hat{\mu}_\star))$, where notation with 
$\ \hat{} \ $ denotes empirical estimates. The cornerstone of the \texttt{MED} framework is identifying the \textit{confusing instance}, the alternative model that minimizes the $\operatorname{KL}$ divergence while appearing more rewarding than the currently best arm. In the following section, we extend this powerful concept to the substantially more complex setting of LQR.

\section{Efficient Confusing Instance Search for LQR} \label{section-3}

In this section, we now discuss the main insight of our contribution,
borrowing the notion of confusing instances originating from MAB theory to the LQR framework.

\paragraph{Intuition.}
The central element revealing the structure of a sequential decision problem appears when deriving lower bounds on the regret performance of any consistent learner, namely a learner able to achieve optimality on a class of decision problems $\bbM$ rather than a single instance $\bfM\in\bbM$.
The high-level idea is easy to get,
and consists of considering, for a given 
$\bfM\in\bbM$ a policy $\pi$ that isn't optimal in $\bfM$, hence does not achieve minimal cost $J_\star(\bfM)$, where here $\star$ is optimal in  $\bfM$.
We then want to build another MDP $\tbbM$ in which $\pi$ achieves better gain, that is $J_\pi(\tbfM) \leq J_\star(\bfM)$.
Given the multitude of possible MDPs satisfying these conditions, we naturally seek those informationally closest to our initial estimate $\bfM$.
More precisely, the rationale is that if  $\bfM$ and $\tbfM$ are hard to distinguish from playing optimally in $\bfM$, say, from a hypothesis-testing perspective, then any learner that must be optimal in both environments should deviate from playing $\star$.

Formally, let $\Pi^\star(\bfM)= \{ \pi \in\Pi: J_\pi(\bfM)\leq J_{\pi'}(\bfM) \forall \pi'\in\Pi \}$
denote optimal policies for $\bfM$, and alternative models as
$\text{Alt}(\bfM) = \{ \tbfM \in \bbM : \Pi^\star(\bfM)\cap \Pi^\star(\tbfM) = \emptyset\}$.
Introducing $d(\bfM,\tbfM)$ to be e.g. the expected log-likelihood ratio of a trajectory generated from $\Pi^\star(\bfM)$ in both models, we then look for $\tbfM\in \text{Alt}(\bfM)$  minimizing $d(\bfM,\tbfM)$. Such an instance is called confusing or model $\bfM$.

Specializing this approach to LQ systems
introduces both simplifications and challenges.
Interestingly, given $\bfM(\Theta)$, $\Pi^\star(\bfM)$
reduces to $\pi_{K^\star}$, hence we can consider the expected log-likelihood ratio along the trajectory from 
$K^\star$ in both systems.
Note that $K^\star$ must stabilize both systems.

\begin{proposition}[Asymptotic per-step expected log-likelihood ratio for LQR] \label{prop:elr}
Given a gain $K$ that is stabilizing for the two systems $\Theta$ and $\widetilde\Theta$, and assuming both systems share the same covariance matrix $\Omega$, the asymptotic per-step expected log-likelihood  under the two systems is 
\begin{equation}
    {\bf d}_K(\Theta \Vert \widetilde{\Theta}) 
    \stackrel{\text{def}}{=} \lim_{T \rightarrow \infty} \frac{1}{T} \mathbb{E}_\Theta \left [ \log \frac{\mathbf{p}(\tau_T)}{\tilde{\mathbf{p}}(\tau_T)} \right ]= \frac{1}{2} \operatorname{Tr} \left( (A_K - \widetilde{A}_K)^\intercal \Omega^{-1} (A_K - \widetilde{A}_K) \Sigma_K(\Theta) \right ).
\end{equation}
where $\tau_T$ denotes a trajectory of length $T$ from $\pi_K$ and the stationary distribution $\Sigma_K(\Theta)$ induced by $K$ satisfies a discrete-time Lyapunov equation $\Sigma_K(\Theta) = \mathbb E_\Theta \left [x^\intercal_tx_t \right| K] = \Omega + A_K \Sigma_K(\Theta) A_K^\intercal$. 
\end{proposition}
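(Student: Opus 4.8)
The plan is to exploit the fact that under the fixed policy $\pi_K$ both systems reduce to linear Gaussian Markov chains sharing the same noise covariance, so that the trajectory log-likelihood ratio telescopes into a sum of single-step contributions whose normalizing constants cancel. First I would write the transition density under $\Theta$ as a Gaussian in the innovation $x_{t+1} - A_K x_t$ with covariance $\Omega$, and likewise under $\widetilde\Theta$ with $\widetilde A_K = \widetilde A - \widetilde B K$. Since the policy is deterministic ($u_t = -Kx_t$), the controls carry no randomness and the trajectory likelihood factorizes over state transitions; taking the ratio, the identical normalizing constants $(2\pi)^{d/2}|\Omega|^{1/2}$ disappear, leaving only the difference of the two quadratic forms.

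The key algebraic step is to evaluate this difference on data generated by $\Theta$. Writing $\Delta = A_K - \widetilde A_K$ and substituting the data-generating relation $x_{t+1} - A_K x_t = w_t$, the single-step log-ratio becomes $w_t^\intercal \Omega^{-1} \Delta x_t + \tfrac12 x_t^\intercal \Delta^\intercal \Omega^{-1} \Delta x_t$ after expanding $(w_t + \Delta x_t)^\intercal \Omega^{-1}(w_t + \Delta x_t)$ and cancelling the $w_t^\intercal \Omega^{-1} w_t$ terms. Taking $\mathbb{E}_\Theta[\cdot]$, the cross term vanishes because $w_t$ is independent of $x_t$ with zero mean, so only the quadratic term survives, giving $\tfrac12 \operatorname{Tr}\!\big(\Delta^\intercal \Omega^{-1} \Delta\, \mathbb{E}_\Theta[x_t x_t^\intercal]\big)$ for each $t$ (here I read the displayed $\Sigma_K$ as the second-moment matrix $\mathbb{E}[x_t x_t^\intercal]$, the stated scalar form being a typo).

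It remains to pass from the per-step expression to the stated asymptotic average. Summing over $t = 0,\dots,T-1$ and dividing by $T$ yields a Cesàro average of $\tfrac12\operatorname{Tr}(\Delta^\intercal \Omega^{-1}\Delta\, \mathbb{E}_\Theta[x_t x_t^\intercal])$; any initial-state contribution at $x_0$ is a single bounded term killed by the $1/T$ factor. The final ingredient is convergence of the second moment: because $K$ stabilizes $\Theta$ we have $\rho(A_K) < 1$, so the recursion $\Sigma_{t+1} = \Omega + A_K \Sigma_t A_K^\intercal$ converges geometrically to the unique solution $\Sigma_K(\Theta)$ of the Lyapunov equation $\Sigma_K = \Omega + A_K \Sigma_K A_K^\intercal$, independently of $x_0$. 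Geometric convergence of $\mathbb{E}_\Theta[x_t x_t^\intercal]$ then forces its Cesàro average to the same limit, yielding $\tfrac12\operatorname{Tr}\!\big((A_K - \widetilde A_K)^\intercal \Omega^{-1}(A_K - \widetilde A_K)\,\Sigma_K(\Theta)\big)$.

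I would flag the limit exchange as the only delicate point: one must confirm that taking expectations commutes with the limit and that the transient decays fast enough for the Cesàro mean to coincide with the stationary value. Stability of $A_K$ under $\Theta$, the generating system, supplies exactly this. Note that stabilizability of $\widetilde\Theta$ is not needed for the computation itself, since the density $\tilde{\mathbf{p}}$ is well defined regardless, but is required for $\widetilde\Theta$ to be an admissible alternative model in the surrounding confusing-instance problem.
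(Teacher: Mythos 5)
Your proposal is correct and follows essentially the same route as the paper's proof: factorize the trajectory likelihood over transitions, substitute $x_{t+1}=A_Kx_t+w_t$ so the quadratic forms collapse to $\tfrac12 x_t^\intercal\Delta^\intercal\Omega^{-1}\Delta x_t + w_t^\intercal\Omega^{-1}\Delta x_t$, kill the cross term by independence of $w_t$ and $x_t$, and pass to the trace against the second-moment matrix solving the Lyapunov equation. If anything, you are slightly more careful than the paper at the final step --- the paper's proof treats $\mathbb{E}_\Theta[x_tx_t^\intercal]$ as already equal to the stationary $\Sigma_K(\Theta)$ for every $t$, whereas you justify the Ces\`aro limit via geometric decay of the transient under $\rho(A_K)<1$ --- and you correctly identify $\mathbb{E}_\Theta[x_t^\intercal x_t\,|\,K]$ in the statement as a typo for $\mathbb{E}_\Theta[x_tx_t^\intercal\,|\,K]$, consistent with the appendix.
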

The proof of this proposition is given in Appendix \ref{proof-1}. We now have the necessary elements to tackle the challenge of identifying the most confusing instances in LQR.

\subsection{The Challenge of Approaching the Most Confusing Instance}

Finding the most confusing instance and its associated sub-optimality cost is generally NP-hard. This section introduces key simplifications that yield a computationally efficient approximation.

At a high level, rather than optimizing ${\bf d}_K(\Theta \Vert \widetilde{\Theta})$ over all possible confusing $\tilde \Theta$, we will proceed in Section~\ref{sec:algo} by sampling  a finite set of perturbations
$\Theta'_1,\dots,\Theta'_n$ around a base configuration $\Theta$ and then optimize within the convex hull of these anchor points.
To justify our approach, we analyze an optimization concerning a single perturbation parameter $\Theta'$ of the system. Thanks to the explicit form of optimal policies in LQR, the optimization problem can be formulated as
\begin{equation} \label{eq:lqr-mci-opt}
\underline{\mathbf{K}}(\Theta \Vert \Theta') = \inf_{\widetilde{\Theta}} \{\mathbf d_K(\Theta, \widetilde{\Theta}) \quad
    \text{subject to} \quad J_{K'}(\widetilde\Theta) < J_{K}(\widetilde\Theta)\},
\end{equation}
where we consider two \textit{close} stabilizable instances $\Theta$ and $\Theta'$, with their respective optimal gains $K = K^\star(\Theta)$ and $K' = K^\star(\Theta')$. This objective function is strictly convex in $\widetilde{\Theta}$ for fixed $\Theta$. However, the constraint is non-convex, as the set of stable matrices is generally non-convex. To search for solutions that are both stable with controlled cost, we first observe 
that as each stabilizing LQ system is guaranteed to have a \textit{}{unique} optimal gain that minimizes the associated cost, the cost of $K'$ cannot exceed that of $K$ in $\Theta'$, ensuring that $J_{K'}(\Theta') \leq J_{K}(\Theta')$. (In particular, $\Theta'$ is a feasible solution of the optimization problem defined in Equation $\eqref{eq:lqr-mci-opt}$ and, we get the crude upper bound $\underline{\mathbf{K}}(\Theta \Vert \Theta') \leq \mathbf{d}_K(\Theta, \Theta')$.)

From the preceding initial remark, we thus observed that $J_{K'}(\Theta') - J_{K}(\Theta') < 0$, while $J_{K'}(\Theta) - J_{K}(\Theta) > 0$. This justifies performing a line search interpolating between $\Theta$ and $\Theta'$, effectively reducing the optimization to a one-dimensional search problem, and yielding a reduced upper bound on the sub-optimality cost. 
More formally, we introduce the analytic curve connecting these instances, parametrized by $\alpha \in [0, 1]$ and expressed as $\Theta(\alpha) = (A + \alpha \Delta_A, B + \alpha\Delta_B)$ with $\Delta_A = A' - A$ and $\Delta_B = B' - B$. We then form the following key result.

\begin{proposition}[Sub-optimality cost refinement]\label{prop:sub} Using the linear interpolation parametrization, a valid upper-bound on $\underline{\mathbf{K}}(\Theta \Vert \Theta')$ can be obtained by finding the root of 
    \begin{equation} \label{eq:refinement}
        \mathcal{L} (\alpha) = J_{K'}(\Theta(\alpha)) - J_{K}(\Theta(\alpha)) = 0.
    \end{equation}
\end{proposition}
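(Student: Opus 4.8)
The plan is to reduce the proposition to an application of the Intermediate Value Theorem combined with the continuity of the objective $\mathbf{d}_K(\Theta, \cdot)$, exploiting the two sign conditions already recorded in the text. First I would observe that the map $\widetilde\Theta \mapsto J_K(\widetilde\Theta) = \sigma_w^2 \operatorname{Tr}(P_K(\widetilde\Theta))$ is continuous on the open region where $K$ stabilizes $\widetilde\Theta$: the matrix $P_K(\widetilde\Theta)$ solves the discrete Lyapunov equation $P_K = Q_K + \widetilde{A}_K^\intercal P_K \widetilde{A}_K$, whose solution depends analytically on the entries of $\widetilde{A}_K$ whenever $\rho(\widetilde{A}_K) < 1$. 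Consequently, along the interpolation path $\Theta(\alpha) = (A + \alpha\Delta_A, B + \alpha\Delta_B)$, both $\alpha \mapsto J_K(\Theta(\alpha))$ and $\alpha \mapsto J_{K'}(\Theta(\alpha))$ are continuous wherever the respective gains are stabilizing, and hence so is $\mathcal{L}$.

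Next I would pin down the endpoint signs. Since $K = K^\star(\Theta)$ is the unique optimal gain for $\Theta$, every other gain is strictly more costly, giving $\mathcal{L}(0) = J_{K'}(\Theta) - J_K(\Theta) > 0$; symmetrically, $K' = K^\star(\Theta')$ being uniquely optimal for $\Theta'$ yields $\mathcal{L}(1) = J_{K'}(\Theta') - J_K(\Theta') < 0$. With $\mathcal{L}$ continuous on $[0,1]$ and changing sign between the endpoints, the Intermediate Value Theorem produces a first root $\alpha^\star \in (0,1)$ with $\mathcal{L}(\alpha^\star) = 0$.

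It remains to convert this root into the claimed bound. At $\alpha^\star$ the constraint $J_{K'} = J_K$ holds with equality, so $\Theta(\alpha^\star)$ lies on the boundary of the feasible set $\{\widetilde\Theta : J_{K'}(\widetilde\Theta) < J_K(\widetilde\Theta)\}$ of Equation~\eqref{eq:lqr-mci-opt}; since $\alpha^\star$ is the first crossing, for $\alpha$ just beyond it we have $\mathcal{L}(\alpha) < 0$, making $\Theta(\alpha)$ strictly feasible. Using the explicit trace formula of Proposition~\ref{prop:elr}, $\mathbf{d}_K(\Theta, \cdot)$ is continuous, so letting $\alpha \downarrow \alpha^\star$ along these feasible points gives
\[
\underline{\mathbf{K}}(\Theta \Vert \Theta') = \inf_{J_{K'}(\widetilde\Theta) < J_K(\widetilde\Theta)} \mathbf{d}_K(\Theta, \widetilde\Theta) \leq \mathbf{d}_K(\Theta, \Theta(\alpha^\star)),
\]
which is exactly the asserted upper bound.

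The main obstacle I anticipate is justifying the continuity of $\mathcal{L}$ across the \emph{entire} interval, i.e., that both $K$ and $K'$ remain stabilizing for every $\Theta(\alpha)$. Neither $K$ stabilizing $\Theta$ nor $K'$ stabilizing $\Theta'$ guarantees a priori that, say, $K$ stabilizes the far endpoint $\Theta'$; were a gain to lose stability at some intermediate $\alpha$ the corresponding cost would diverge and the IVT argument would break. This is precisely where the hypothesis that $\Theta$ and $\Theta'$ are \emph{close} is essential: by continuity of the spectral radius, taking them sufficiently close keeps both $\rho\big((A+\alpha\Delta_A) - (B+\alpha\Delta_B)K\big) < 1$ and the analogous quantity for $K'$ strictly below one, uniformly in $\alpha \in [0,1]$, so both costs stay finite and $\mathcal{L}$ is genuinely continuous. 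Making this uniform stability margin quantitative is the delicate step, and it is exactly the role played by the sensitivity and stability analysis developed in the next section.
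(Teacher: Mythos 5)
Your proposal is correct and follows essentially the same route as the paper's proof: continuity of $\mathcal{L}$ on $[0,1]$, the endpoint sign conditions $\mathcal{L}(0)>0$ and $\mathcal{L}(1)<0$ from uniqueness of the optimal gains, and Bolzano's theorem (IVT) to produce the root, with the root then certifying feasibility for the infimum in Equation~\eqref{eq:lqr-mci-opt}. If anything, you are slightly more careful than the paper on two points it glosses over --- the limiting argument $\alpha \downarrow \alpha^\star$ to reconcile the strict inequality in the feasible set with the equality at the root, and the explicit flag that both gains must remain stabilizing along the whole interpolation path (which the paper defers to its stability-radius discussion and the interpolation-stability mask of Equation~\eqref{eq:mask}) --- though you omit the paper's closing remarks on the absence of local optima and the monotonicity of $\mathbf{d}_K$ along the curve, which it uses to argue the root is found by Newton's method and is the solution on the linear curve.
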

\begin{proof}
    Assuming $\Theta$ and $\Theta'$ yield different dynamics, $\mathcal{L} (\alpha)$ is a continuous function in $[0, 1]$.
    Now, by definition $\mathcal{L}(0) \cdot \mathcal{L}(1) = \left ( J_{K'}(\Theta) - J_{K}(\Theta) \right ) \cdot \left ( J_{K'}(\Theta') - J_{K}(\Theta') \right )< 0$, because $J_{K'}(\Theta) - J_{K}(\Theta) > 0$ and $J_{K'}(\Theta') - J_{K}(\Theta') < 0$.  This implies that a root exists according to Bolzano's theorem. We can show that $L(\alpha)$ is not convex, but has no local optima, which allows global convergence, as demonstrated in Section 3 of \cite{fazel2018global}. 
    The objective function increases monotonically as $\widetilde{\Theta}$ diverges from $\Theta$ since its derivative equals $t$ times the trace of positive definite matrices' product, ensuring positivity for all $t > 0$.
    Thus, finding the unique root that satisfies the cost constraint is the solution of Eq. \eqref{eq:lqr-mci-opt}, on the linear curve.
\end{proof}
\begin{figure}[h]
    \vspace*{-6mm}
    \centering
    \begin{minipage}{0.325\linewidth}
    \includegraphics[width=\linewidth]{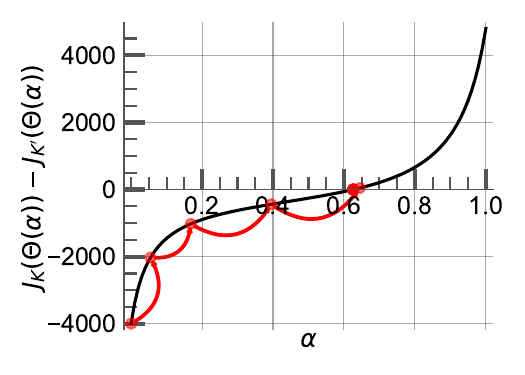}
    \end{minipage}%
    \begin{minipage}{0.675\linewidth}
        \caption{Optimization landscape of the objective $\mathcal{L}(\alpha)$ on two Inverted Pendulum system \citep{barto1983neuronlike} parametrized by the mass and the length of the pendulum. $\Theta$ is parametrized by (.1, .4) and $\Theta'$ by (.3, 1.). Red arrows indicate the Newton steps taken during the optimization process.}
        \label{fig:opt-landscape}
    \end{minipage}
    \vspace*{-6mm}
\end{figure}
This objective is non-convex due to the potential non-stability of the interpolated closed-loop system \citep{lin2009stability},  at the boundary between stable and unstable policies, the objective function quickly becomes infinity. However, $\mathcal{L}(\alpha)$ is \emph{almost} smooth (see Lemma 6 in \cite{fazel2018global}) when the closed-loop systems are not close to the boundary of stability, that allows in practice, to deploy the Newton method, that can solve the objective in few steps, as shown in the Figure \ref{fig:opt-landscape}.

\subsection{Fast Approximate Solution for Small Perturbed Systems}
To find the root of \eqref{eq:refinement}, we introduce a Taylor approximation\footnote{Taylor approximation for finding confusing instance has already been explored by \cite{baudry2023fast} in MABs.} of the objective function. For sufficiently small perturbations $\Theta'$ around $\Theta$, the interpolation between closed-loop systems $A(\alpha) - B(\alpha)K$ and $A(\alpha) - B(\alpha)K'$ remains stable, thanks to the existence of a stability radius \citep{hinrichsen1986stability}. This stability property, supported by perturbation theory, allows us to employ a first-order Taylor expansion to derive a closed-form approximation of Eq. \eqref{eq:refinement}.
\begin{proposition}[Sub-optimality cost refinement under small perturbations] Assume the closed-loop system undergoes perturbations $\Delta_A$ and $\Delta_B$ that are sufficiently small (e.g., $\|\Delta_A\|, \|\Delta_B\| \le \epsilon$ for a small $\epsilon>0$) so that higher-order terms can be neglected, we denote $\Delta_K = \Delta_A + \Delta_BK$ the perturbation on the closed loop system, and the objective $\mathcal{L}(\alpha)$ can be approximated with
\begin{equation}
    \mathcal{L}(\alpha) \approx (p_K - p_{K'}) - \alpha (\overline{p}_K - \overline{p}_{K'})+ \alpha^2(\overline{\overline{p}}_K - \overline{\overline{p}}_{K'}),
\end{equation}
with $p_K = \operatorname{Tr}(P_K(\Theta)),\ \overline{p}_K = \operatorname{Tr}(\overline{P}_K(\Theta)), \ \overline{\overline{p}}_K = \operatorname{Tr}(\overline{\overline{P}}_K(\Theta))$, and $\overline{P}_K(\Theta) = A_K^\intercal \overline{P}_K(\Theta) A_K + A_K^\intercal P_K(\Theta) \Delta_K + \Delta_K^\intercal P_K(\Theta) A_K,\ 
    \overline{\overline{P}}_K(\Theta) =  A_K^\intercal \overline{\overline{P}}_K(\Theta) A_K + \Delta_K^\intercal P_K(\Theta) \Delta_K$.
This is a second-degree polynomial whose coefficients correspond to the trace of the solution of discrete-time Lyapunov equations. The solution can be obtained by identifying the positive root.
\end{proposition}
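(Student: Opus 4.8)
The plan is to read the whole claim as a second-order Taylor expansion in $\alpha$ of the scalar map $\alpha \mapsto \operatorname{Tr}(P_K(\Theta(\alpha)))$, run the same expansion for $K'$, and subtract. The key structural observation is that under the interpolation $\Theta(\alpha)=(A+\alpha\Delta_A,\,B+\alpha\Delta_B)$ the closed-loop matrix of the \emph{fixed} gain $K$ is affine in $\alpha$, namely $A_K(\alpha)=A_K+\alpha\Delta_K$ with $\Delta_K$ the induced closed-loop perturbation (concretely $\Delta_A-\Delta_B K$, since $A_K=A-BK$), while the shaping matrix $Q_K=Q+K^\intercal R K$ is independent of $\alpha$. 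Hence $P_K(\Theta(\alpha))$ is the solution of the one-parameter Lyapunov equation $P = Q_K + A_K(\alpha)^\intercal P\,A_K(\alpha)$, and all $\alpha$-dependence enters only through $A_K(\alpha)$.

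First I would posit $P_K(\Theta(\alpha)) = P_K^{(0)} + \alpha\,\overline{P}_K + \alpha^2\,\overline{\overline{P}}_K + O(\alpha^3)$, substitute it together with $A_K(\alpha)=A_K+\alpha\Delta_K$ into the Lyapunov equation, and match powers of $\alpha$. The $\alpha^0$ identity reproduces the unperturbed equation, so $P_K^{(0)}=P_K(\Theta)$ and $p_K=\operatorname{Tr}(P_K(\Theta))$. The $\alpha^1$ identity gives exactly the stated equation for $\overline{P}_K$, as the forcing collects the two cross terms $A_K^\intercal P_K^{(0)}\Delta_K + \Delta_K^\intercal P_K^{(0)} A_K$. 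Taking traces of the coefficient matrices yields $\overline{p}_K$ and $\overline{\overline{p}}_K$; running the identical argument for $K'$ (with $\Delta_{K'}=\Delta_A-\Delta_B K'$) and subtracting produces the claimed quadratic $\mathcal{L}(\alpha)\approx(p_K-p_{K'})-\alpha(\overline{p}_K-\overline{p}_{K'})+\alpha^2(\overline{\overline{p}}_K-\overline{\overline{p}}_{K'})$, up to the overall factor $\sigma_w^2$ and the sign convention of the statement.

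To make each coefficient well-defined I would invoke the stability-radius argument already cited: for $\|\Delta_A\|,\|\Delta_B\|\le\epsilon$ below the stability radius of $A_K$, the matrix $A_K(\alpha)$ stays Schur-stable for all $\alpha\in[0,1]$, so $\rho(A_K(\alpha))<1$ persists, the Lyapunov operator $X\mapsto X-A_K^\intercal X A_K$ is invertible, each coefficient is the unique solution of its Lyapunov equation, and $\alpha\mapsto P_K(\Theta(\alpha))$ is analytic with truncation error $O(\alpha^3)$ that is uniformly small in this regime.

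The delicate point — the step I expect to need care — is the $\alpha^2$ identity. Matching the quadratic terms \emph{exactly} gives $\overline{\overline{P}}_K = A_K^\intercal\overline{\overline{P}}_K A_K + \Delta_K^\intercal P_K(\Theta)\Delta_K + \bigl(A_K^\intercal\overline{P}_K\Delta_K + \Delta_K^\intercal\overline{P}_K A_K\bigr)$, so the true second-order coefficient also carries the cross terms built from $\overline{P}_K$, whereas the stated equation keeps only $\Delta_K^\intercal P_K(\Theta)\Delta_K$. Since $\overline{P}_K=O(\epsilon)$ and $\Delta_K=O(\epsilon)$, the dropped terms are $O(\epsilon^2)$, the \emph{same} formal order as the retained forcing; this matters because $K$ and $K'$ are the optimal gains of nearby systems, so first-order optimality already forces $p_K-p_{K'}$ to be small, and one must track the orders of all three coefficients to be sure the truncation is consistent. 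I would therefore present the stated $\overline{\overline{P}}_K$ not as exact matching but as a deliberate simplification that makes the coefficient computable from $P_K(\Theta)$ with a single extra Lyapunov solve, arguing that the resulting positive root shifts only at higher order and serves merely to initialize the Newton iteration on the exact $\mathcal{L}(\alpha)$ of Proposition~\ref{prop:sub}; alternatively one retains the cross terms for the genuinely consistent second-order model at negligible extra cost.
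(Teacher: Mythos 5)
Your proposal is correct and arrives at the stated quadratic, but by a genuinely different route than the paper. The paper's proof works entirely in vectorized form: it writes $\operatorname{Tr}\bigl(P_K(\Theta(\alpha))\bigr)=\mathbf{i}^\top\bigl(I_{d^2}-A_K^\top(\alpha)\otimes A_K^\top(\alpha)\bigr)^{-1}\mathbf{q}_K$, expands the Kronecker square as $X_K+\alpha\,\overline{X}_K+\alpha^2\,\overline{\overline{X}}_K$, applies a \emph{first-order Neumann expansion} of the inverse around $Y_K=(I_{d^2}-X_K)^{-1}$, reads off the three scalar coefficients as $\mathbf{i}^\top Y_K(\cdot)\,Y_K\,\mathbf{q}_K$, and only at the end translates back into the Lyapunov equations of the statement via the vec/trace identities. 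You instead posit $P_K(\Theta(\alpha))=P_K+\alpha\,\overline{P}_K+\alpha^2\,\overline{\overline{P}}_K+O(\alpha^3)$ directly in the parametric Lyapunov equation $P=Q_K+A_K(\alpha)^\intercal P A_K(\alpha)$ and match powers of $\alpha$. Since $Y_K$ is exactly the vectorized inverse of the Lyapunov operator $X\mapsto X-A_K^\intercal X A_K$, the two computations are algebraically equivalent; yours yields the coefficient Lyapunov equations with no Kronecker bookkeeping and makes well-posedness (stability radius, invertibility of the Lyapunov operator, analyticity in $\alpha$) explicit, while the paper's form makes the closed-form root and the implementation immediate. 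Two of your side remarks are substantively right and sharper than the paper's own write-up. First, $\Delta_K=\Delta_A-\Delta_B K$ given $A_K=A-BK$; the paper's $\Delta_A+\Delta_B K$ is a sign slip, as is its Neumann step $Y_K-Y_K\widetilde{X}_K Y_K$ (the correct first-order expansion is $Y_K+Y_K\widetilde{X}_K Y_K$, which makes the linear coefficient enter with a plus sign, so the minus in the stated polynomial is a convention artifact of that slip rather than a derived sign). Second, your observation that exact $\alpha^2$ matching carries the cross terms $A_K^\intercal\overline{P}_K\Delta_K+\Delta_K^\intercal\overline{P}_K A_K$, omitted from the stated $\overline{\overline{P}}_K$, identifies precisely the term $Y_K\overline{X}_K Y_K\overline{X}_K Y_K$ discarded by the paper's first-order Neumann truncation; since that term is $O(\epsilon^2)$ — the same formal order as the retained forcing $\Delta_K^\intercal P_K\Delta_K$ — the paper's approximation is only heuristically second order, and your framing of the stated $\overline{\overline{P}}_K$ as a deliberate computational simplification (or your alternative of retaining the cross terms at the cost of one extra Lyapunov solve, e.g.\ to initialize the Newton iteration on the exact $\mathcal{L}(\alpha)$) is the honest reading of what the paper actually proves.
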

The proof of this proposition is in \ref{proof-2}. We now have all the elements to design an efficient algorithm.

\section{Towards Minimum Empirical Divergence Strategies for Online LQR} \label{section-4}
\label{sec:algo}

In this section, we introduce \texttt{MED-LQ}, our novel algorithm that extends the asymptotically optimal \texttt{MED} strategy of \citet{honda2011asymptotically} from MABs to LQ systems. Our approach incorporates several adaptations specifically crafted to address the unique challenges of continuous dynamics.

\subsection{The \texttt{MED-LQ} Algorithm} 

\newcommand{\lIfElse}[3]{ #2 \textbf{if}{ (#1)}~{\textbf{else}~#3}}

\begin{algorithm}[H]
\caption{\texttt{MED-LQ}: Minimum Empirical Divergence for Linear Quadratic Systems \label{algo}}
\KwIn{$Q$, $R$, $\widehat{\Theta}_0$, $V_0=\lambda I$, $\delta>0$, $T$, $n$, $\sigma_\eta$, $\sigma_v$, $\epsilon$.}
\For{$t=0,\ldots,T$}{
  \uIf{$\det(V_t)>2\det(V_0)$}{
    Compute $\widehat{\Theta}_t$ via RLS \eqref{eq:rls} and set $\widehat{K}_t=K(\widehat{\Theta}_t)$\;
    Generate $n$ perturbations 
    $
      \left \{W_i=\eta_i\,e_j e_k^\top \mid j,k\sim\mathcal{U}(\{1,\ldots,n\}),\, \eta_i\sim\mathcal{U}(-\sigma_\eta,\sigma_\eta) \right\};
    $ \\
    Form the candidate sets $\{\overline\Theta_i=\widehat{\Theta}_t+W_i\}$  and $ \{K_i=K(\Theta_i)\}$\;
    Define the mask $m_i= m(\overline \Theta_i, \widehat\Theta_t; \epsilon)\in\{0,1\}$ \eqref{eq:mask}\;
    For each candidate with $m_i=1$, compute the $h_i=\mathbf{H}(\widehat{\Theta}_t\parallel \overline \Theta_i;V_t)$ \eqref{eq:med-lq}\;
    Set $\widetilde{\Theta}_t=\widehat{\Theta}_t+\sum_{i=1}^n\omega_i\,W_i$ with $\omega_i=
    \exp(h_i)m_i \Big/\sum_{j=1}^n\exp(h_j)m_j$ and $V_0 = V_t$\;
  }
  \Else{
    Set $\widetilde{\Theta}_t=\widetilde{\Theta}_{t-1}$\;
  }
  Compute the optimal empirical gain $\widetilde{K}_t=K(\widetilde{\Theta}_t)$\;
  Apply $u_t =$ \lIfElse{$\widetilde K_t$ stabilize $\widetilde\Theta_t$}{$\widetilde{K}_t x_t$}{$\widetilde{K}_t x_t + \nu_t$, with $\nu_t \sim \mathcal{N}(0, \sigma^2_\nu)$}\;
  Obtain $x_{t+1}$ and record $(z_t,x_{t+1})$ and update $V_{t+1}=V_t+z_tz_t^\top$;
}
\end{algorithm}

\texttt{MED-LQ} is an online learning algorithm that carefully balances exploration and exploitation in linear dynamical systems. Inspired by the standard learning framework of \cite{abbasi2011regret}, the algorithm proceeds in rounds over a finite horizon. 
At each time step $t$, it first checks whether the accumulated information, quantified by the determinant of the design matrix $\det(V_t)$ has doubled (line 2). When it does, a new optimal empirical parameter $\widehat{\Theta}_t$ is computed using RLS, and the corresponding control gain is derived $\widehat{K}_t=K(\widehat{\Theta}_t)$. To enhance exploration, \texttt{MED-LQ} generates a collection of $n$ candidate parameters $\forall i \in \{0, \cdots, n\}, \ \overline{\Theta}_i = \widehat{\Theta}_t + W_i$ by applying random rank-one perturbations $W_i$ to the RLS estimate (line 4,5). 
Rank-one perturbations simplify the stability analysis, making it tractable \citep{laffey2002stability}, and are inspired by the local-policy search from \citep{pesquerel2021stochastic}.
Each candidate is then filtered through a set of constraints (line 6), to ensure that the most confusing instance search \eqref{eq:lqr-mci-opt} is well-defined.  
The search for the most confusing instance is well-defined when the following  constraints defined by $m(\overline{\Theta}, \widehat{\Theta}; \epsilon)$ hold 
\begin{equation} \label{eq:mask}
\mathbb{I}\bigg\{ \underbrace{\rho(\widehat{A}_{\widehat{K}}) < 1 \land
    \rho(\overline{A}_{\overline{K}}) < 1}_{\text{Closed-loop stability.}} 
    \land \color{gray} \underbrace{\widehat{A}_{\widehat{K}}\widehat{A}_{\overline{K}} \succeq 0 \land \overline{A}_{\widehat{K}}\overline{A}_{\overline{K}} \succeq 0}_{\text{Linear interpolation stability.}} \color{black}
    \land \underbrace{J_{\widehat{K}}(\widehat{\Theta}) - J_{\overline{K}}(\widehat{\Theta}) > \epsilon}_{\text{Alternative set membership.}} \bigg\},
\end{equation}  
where $X \succeq 0$ denotes positive semi-definiteness, and $\epsilon$ is a small threshold value. The first two conditions ensure closed-loop stability. The next two follow from Theorem 1 of \cite{laffey2002stability}, and check that the linear curve between the two closed-loop systems is stable. The last condition checks if $\overline{\Theta}$ belongs to the alternative set of $\widehat\Theta$. 
The linear interpolation stability condition, enabled by our rank-one perturbations, represents a conservative approach. While ensuring stability across the entire interpolation interval exceeds technical requirements, removing this constraint would necessitate computing confusing costs for more instances and implementing careful post-filtering mechanisms. 
We recommend this filtering criterion for computational efficiency, especially in systems with small to moderate dimensions.
For those candidates that pass the stability check, the algorithm evaluates their \emph{Minimum Empirical Divergence} (line 8), which captures the cost of making a perturbed system optimal. This quantity is inspired by  \texttt{MED} and \texttt{LinMED} strategies.
\begin{definition}[Minimum Empirical Divergence coefficients for LQR] During the learning process, where $V_t$ represent the design matrix at time $t$, $\widehat \Theta_t$ the empirical optimal RLS estimate and $\Theta$ an alternative parameter, the minimum empirical divergence is given by
    \begin{equation} \label{eq:med-lq}
        \mathbf{H}_t(\Theta) = - \frac{\mathbf{K}(\widehat\Theta_t \Vert \Theta)}{\Vert \Theta \Vert^2_{V_{t}^{-1}}}.
    \end{equation}
\end{definition}
\texttt{MED-LQ} generates exponential weights (line 8) to create a weighted combination of perturbations, biasing parameter estimates toward candidates with lower divergence values.
Finally, the corresponding control gain is applied to the system. 
We introduce additional isotropic exploration noise $\nu_t$, similarly to \cite{tu2019gap,lale2022reinforcement,pmlr-v178-kargin22a}, when the empirical gain fails to stabilize the empirical estimate, which intuitively happens mainly in the early rounds.
This noise provides excitation, ensuring the identifiability of the system dynamics by exploring the state-space in all directions.
Finally, new state data is collected to update the design matrix, thus refining the parameter estimates over time. The full algorithm is summarized in Algorithm \ref{algo}.

\subsection{Intuition and design elements} 
Let us now provide insights and sketch the main ideas supporting the soundness of this strategy.
\texttt{MED-LQ} extends the asymptotically optimal \texttt{IMED-RL} algorithm \citep{pesquerel2022imed} for ergodic discrete MDPs to the LQR setting while incorporating continuous aspects developed in \texttt{LinMED} \citep{balagopalan2024minimum} for linear sub-Gaussian MABs. Both methods leverage regret lower bounds to achieve superior efficiency compared to OFU-based approaches, with \texttt{IMED} being the deterministic counterpart of \texttt{MED}.
\paragraph{Ergodicity and information gain.}
In \texttt{IMED-RL}, ergodicity ensures that every policy eventually visits all states, enabling efficient information gathering across the state space. For linear dynamical systems, the situation is comparable: observing a single state can provide global insights about system dynamics, similar to the information transfer in linear bandits. However, since quantifying the per-step information gain is challenging, we execute each chosen policy for multiple steps until a significant change in information volume occurs (line 2).
\paragraph{Policy improvement.}
A cornerstone of \texttt{IMED-RL} is exploiting the policy improvement property from \cite{puterman2014markov}, which guarantees that in discrete ergodic MDPs, any sub-optimal policy can be improved through a \emph{local} (single-state) modification, a convenient property not universally applicable. 
This approach efficiently identifies confusing instances by searching only over local policy modifications, with central analysis demonstrating a high probability of policy improvement.
For linear-quadratic systems, we identify single entry-wise perturbations of the system matrix as the natural equivalent to single-state modifications. This approach yields substantial computational benefits, as candidate perturbations become straightforward to generate. However, rather than directly applying single-entry perturbations, which alone may be insufficient to guarantee policy improvement, we form convex combinations of candidates weighted by MED coefficients. This strategic convex combination substantially expands the search space volume, significantly increasing the probability of discovering effective policy improvements.
\paragraph{Policy gradient.}
While \texttt{IMED-RL} estimates an empirical MDP, applies value iteration, and selects actions minimizing the IMED index, \texttt{MED-LQ} follows a parallel approach. We estimate system parameters via RLS, solve the DARE to capture the value function, and define our minimum empirical divergence analogously to \texttt{IMED-RL}. Inspired by \texttt{LinMED}, the term $1 \big/ \Vert \Theta\Vert ^2_{V_t^{-1}}$ effectively functions as a visitation count analog. Conceptually, where \texttt{IMED-RL} implements policy iteration, \texttt{MED-LQ} adopts an approximate policy gradient approach. The fundamental intuition is that a policy's selection likelihood should at least match its posterior probability of optimality, with perturbations directing exploration toward promising regions of the policy space.
\paragraph{Continuum policy and $\epsilon$-optimality.} The policy improvement lemma from \cite{puterman2014markov} applies to discrete, ergodic MDPs, where finitely many policies ensure that a finite number of improvement steps reach optimality. This property doesn't extend to continuous MDPs with infinite policy sets. By introducing the parameter $\epsilon$ in our filtering condition, we effectively consider $\epsilon$-near-optimal policies rather than strictly optimal ones, implicitly covering the policy space with finitely many level sets. This approach ensures that finitely many $\epsilon$-policy-improvement steps yield a near-optimal policy.
In practice, $\epsilon$ requires careful calibration: not too small (to ensure non-empty filtered sets) and not too large (to avoid requiring excessive policy-improvement steps). We recommend $\epsilon$ that scales between $O(1/T)$ and $O(1/\log^2(T))$.
\paragraph{Excitation.}
A well-known challenge in LQR is the initial information scarcity that impedes the invertibility of matrices defining stable policies. This challenge dissipates after sufficient observations span the entire state space, after adequate system excitation. In line (13), we introduce noise $\nu_t$ to enforce excitation whenever the control fails to stabilize the confusing instance. This mechanism primarily induces additional exploration during early rounds, while in the asymptotic regime, all selected policies naturally stabilize the system, eliminating the need for artificial excitation. In our Section \ref{section-5}, we study the effect of excitation under the name of "auto-stabilization".

Following our insights, a rigorous regret analysis of \texttt{MED-LQ} presents unique theoretical challenges distinct from \texttt{MED}, \texttt{IMED-RL}, or \texttt{LinMED}. Two critical questions emerge: (1) establishing that \texttt{MED-LQ} guarantees high-probability policy improvements with sufficient margin at each iteration, and (2) determining the precise magnitude of entry-wise perturbations needed to ensure policy improvements exist within local neighborhoods. These challenges require adapting policy-improvement arguments to continuous settings, a non-trivial extension demanding specialized analysis beyond this paper's scope. While \texttt{MED-LQ} deliberately addresses these challenges through techniques such as combining multiple single-entry perturbations, we reserve a comprehensive theoretical analysis for future work.

\section{Numerical Experiments} \label{section-5}

\begin{figure}[htbp]
    \centering
    \begin{subfigure}{.98\linewidth}
        \centering
        \includegraphics[width=1.\linewidth]{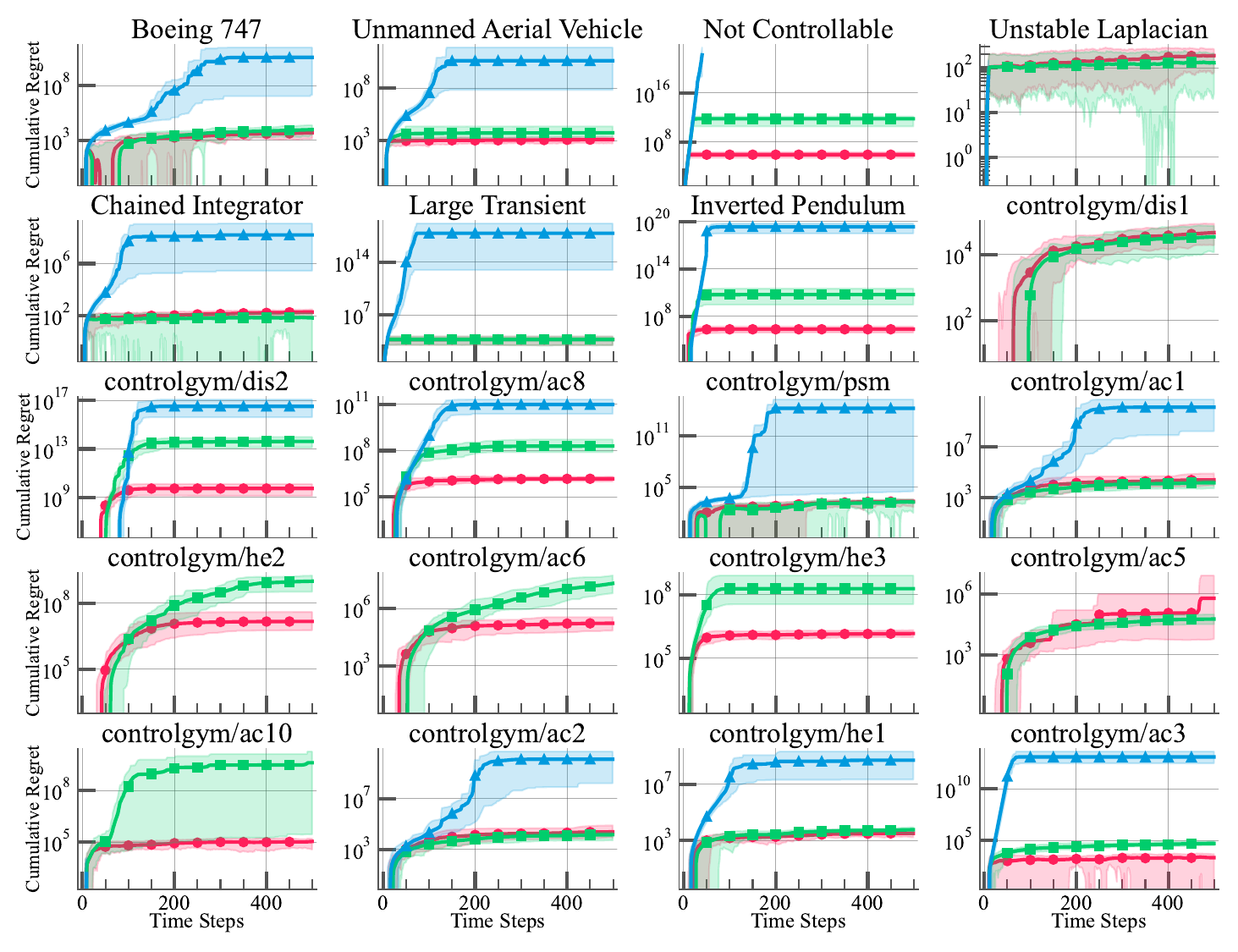}
        \includegraphics[scale=0.75]{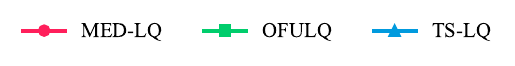}
        \caption{Comparison of \texttt{MED-LQ}, \texttt{OFULQ}, and \texttt{TS-LQ} initialized with a stable controller.}
        \label{fig:scenario1}
    \end{subfigure}
    \begin{subfigure}{.98\linewidth}
        \centering
        \includegraphics[width=1.\linewidth]{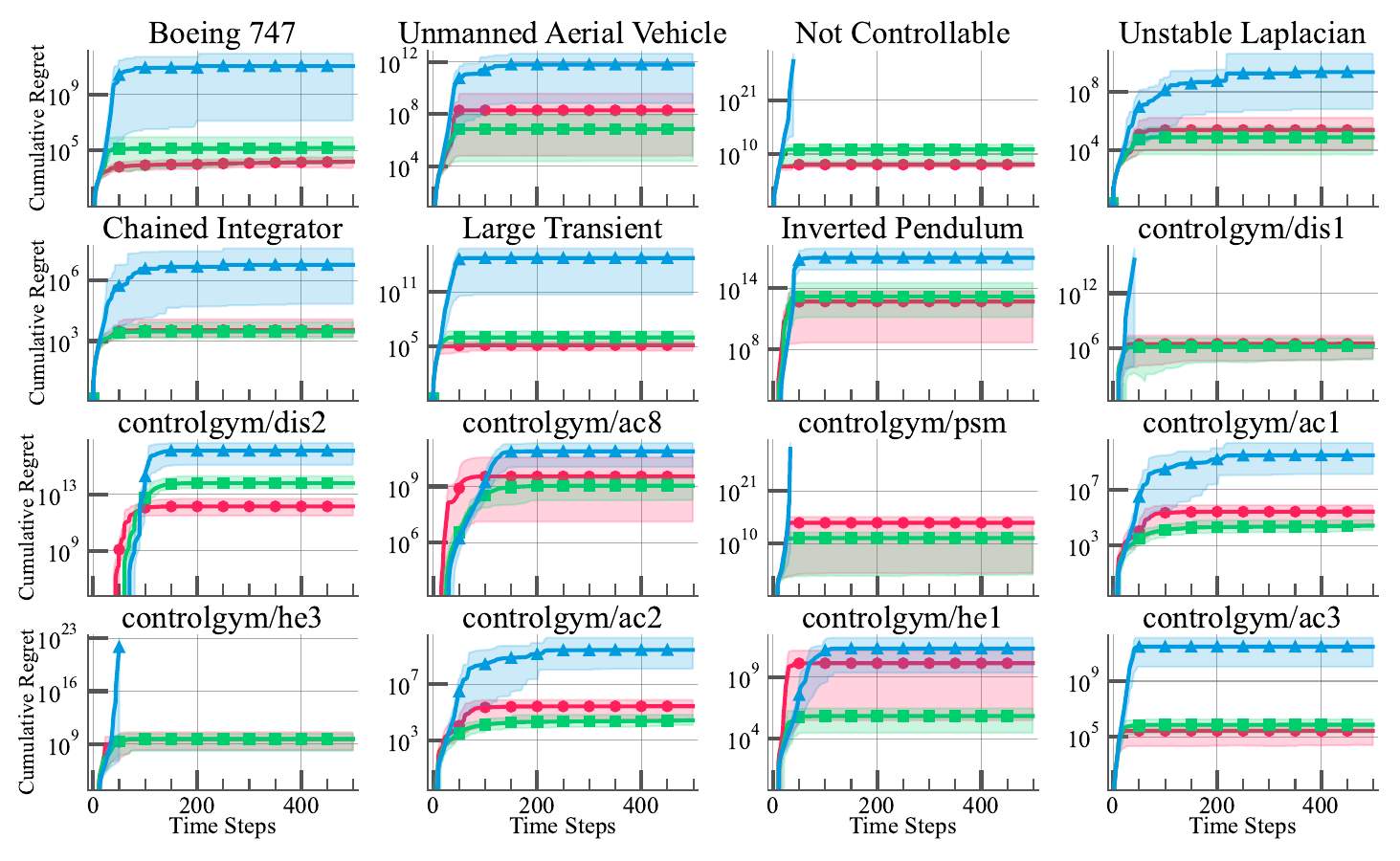}
        \includegraphics[scale=0.75]{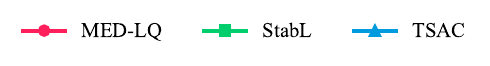}
        \caption{Comparison of \texttt{MED-LQ}, \texttt{StabL}, and \texttt{TSAC} in the auto-stabilization scenario}
        \label{fig:scenario2}
    \end{subfigure}
    \caption{Performance comparison of \texttt{MED-LQ} under two distinct initialization scenarios.}
    \label{fig:enter-label}
\end{figure}

To study the numerical potential of \texttt{MED-LQ}, we evaluate it on a control suite that includes classic environments from the online LQR literature, such as the Boeing 747 and Unmanned Aerial Vehicle, as well as additional industrial control problems from \texttt{controlgym} \citep{zhang2023controlgym}, inspired by real-world applications. All environments are subject to a normal noise $\mathcal{N}(0, 1)$ and have a moderate size (from 2 to 10 dims). We assess the performance of \texttt{MED-LQ} in two distinct scenarios.
\begin{description}
    \item[Scenario 1: Stable Initialization.] In this setting, we initialize the algorithm with a stable controller and seed the dataset with a trajectory of 50 time steps. We compare \texttt{MED-LQ} against \texttt{OFULQ} \citep{abbasi2011regret} and \texttt{TS-LQ} \citep{abeille2017thompson}. The stable initialization allows us to assess the exploration efficiency and convergence properties when the system starts in a well-controlled regime. The results are shown in Figure \ref{fig:scenario1}.
    \item[Scenario 2: Auto-Stabilization] Here, \texttt{MED-LQ} is deployed with an initial parameter estimate $\widehat{\Theta}_0 = \mathbf{0}$. To facilitate auto-stabilization, the policy is executed with isotropic noise $w \sim \mathcal{N}(0, 1)$ for the first 35 time steps, as in \cite{lale2022reinforcement}. We compare \texttt{MED-LQ} against \texttt{StabL} \citep{lale2022reinforcement} and \texttt{TSAC} \citep{pmlr-v178-kargin22a}, the auto-stabilizing counterparts of \texttt{OFULQ} and \texttt{TS-LQ}, respectively. The results are shown in Figure \ref{fig:scenario2}.
\end{description}

\paragraph{Implementation details.} 
We implement all baselines within the JAX framework \citep{jax2018github} using a new library, \texttt{linquax}\footnote{A WIP version of the library is available in \url{https://anonymous.4open.science/r/linquax-4FCF/}.}, which delivers highly performant online LQR algorithms with GPU/TPU support and automatic differentiation. In our implementation, \texttt{OFULQ} and \texttt{StabL} are optimized via projected gradient descent, while \texttt{TS-LQ} and \texttt{TSAC} employ a rejection sampling operator. In addition to the doubling trick, we enforce a minimum patience period of 10 steps to prevent excessive early updates that can lead to increased regret. All algorithms share common hyperparameters, chosen after previous work, with $\lambda = 1\times10^{-4}$ and $\delta = 1\times10^{-4}$. For \texttt{MED-LQ}, we define without hyperparameter search the number of candidates $n=128$ and $\sigma_\eta=1$.  Experiments were conducted in less than 1 hour, on a CPU-only cluster equipped with four 64-core AMD Zen3 processors. For classic environments, we used 64 random seeds, and for \texttt{ controlgym} environments, 48 seeds. Performance metrics are reported as the interquartile mean along with the 25th percentile and 75th percentile for each experiment.

\paragraph{Discussion of results.} We compare \texttt{MED-LQ} against \texttt{OFULQ}, \texttt{TS-LQ}, \texttt{StabL}, and \texttt{TSAC}. Our experimental evaluation reveals that \texttt{MED-LQ} demonstrates strong performance across environments.
With stable initialization, \texttt{MED-LQ} shows rapid convergence to low cumulative regret, validating that CI-guided exploration effectively balances exploration and exploitation. All algorithms benefit from stable initialization, allowing them to focus on policy refinement rather than basic stabilization.
In zero-knowledge settings requiring auto-stabilization, \texttt{MED-LQ} quickly discovers stabilizing policies. It consistently outperforms Thompson Sampling methods, which occasionally fail to find stabilizing controllers even after 10,000 rejection sampling attempts. Compared to state-of-the-art methods \texttt{OFULQ} and \texttt{StabL}, \texttt{MED-LQ} demonstrates superior efficiency in most environments, matching \texttt{StabL}'s performance in others, with the sole exception being the \texttt{controlgym/he1} environment under auto-stabilization.
These results establish \texttt{MED-LQ} as a competitive and reliable alternative to OFU-based and Thompson Sampling approaches for online LQR tasks.

\paragraph{Sample size study.} We now examine how the sample size used in \texttt{MED-LQ} affects both regret and execution time in the Inverted Pendulum environment. Experiments were run on a NVIDIA A100 GPU. Figure \ref{fig:abla} presents the results. The plot on the left shows that runtime remains relatively constant across different sample sizes (0.3-0.5 seconds), highlighting the parallelization capabilities of our GPU implementation. The right plot shows that increasing the sample size leads to slightly lower regret until approximately 64 samples, after which the performance plateaus. This suggests that in Inverted Pendulum 64 samples are sufficient to adequately span the space of candidate policies.

\begin{figure}[htpb]
    \centering
    \includegraphics[width=0.8\linewidth]{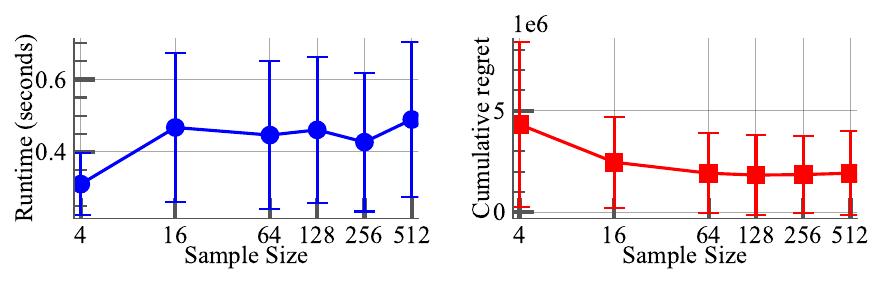}
    \caption{Study of the sample size on the Inverted Pendulum environment.}
    \label{fig:abla}
\end{figure}

\newpage
\section{Conclusion}

In this work, we introduced the Confusing Instance (CI) principle as a novel approach to exploration in online Linear Quadratic Control (LQR). By extending the Minimum Empirical Divergence (MED) framework beyond discrete settings, we developed \texttt{MED-LQ}, the first method to apply the confusing instance principle beyond tabular MDPs.
Our approach employs strategically designed rank-one and entry-wise perturbations that enable efficient identification of confusing instances while maintaining computational feasibility. Notably, \texttt{MED-LQ} avoids confidence bounds (intractable in large spaces) and instead relies on the policy iteration framework. Our methodology is generalizable to other settings: compute empirical optimal policy, generate candidates, approximate confusing instances, compute the minimum empirical divergence, and update policy toward areas minimizing this divergence.
Benchmarks demonstrate that \texttt{MED-LQ} matches state-of-the-art performance, overcoming limitations of existing methods such as OFU and TS. 
\begin{center}
    \emph{We believe that the CI principle deserves greater attention as it introduces a fresh perspective on exploration in continuous MDPs. Our work establishes the foundations for this promising approach, opening new avenues for exploration strategies in complex problems.}
\end{center}

\paragraph{Future Work.}
Future research should refine \texttt{MED-LQ}'s theoretical foundations by establishing formal regret bounds and analyzing the minimal perturbation magnitudes needed for guaranteed policy improvements. A particularly promising direction is to extend the CI principle to high-dimensional problems in deep RL, where efficient exploration remains challenging. The principles established here provide a foundation for novel exploration strategies in both continuous control and complex decision-making tasks.

\subsubsection*{Broader Impact Statement}
\label{sec:broaderImpact}
Our work on efficient exploration in LQR systems has potential applications in robotics, autonomous vehicles, and industrial control systems. While our algorithm enables more efficient learning in these domains, it could also accelerate the deployment of autonomous systems with inherent safety considerations. We advocate for robust safety validation before deploying such learning-based controllers in critical applications.

\section*{Acknowledgments}
This work has been supported by the French Ministry of Higher Education and Research, the Hauts-de-France region, Inria, and the MEL. Additional support was provided by the French National Research Agency under the PEPR IA FOUNDRY project (ANR-23-PEIA-0003).
Experiments presented in this paper were carried out using the PlaFRIM experimental testbed, supported by Inria, CNRS (LABRI and IMB), Université de Bordeaux, Bordeaux INP and Conseil Régional d'Aquitaine (see \url{https://www.plafrim.fr}).
The authors are affiliated with the Inria Scool team project.






\bibliography{main}

\begin{thebibliography}{57}
\providecommand{\natexlab}[1]{#1}
\providecommand{\url}[1]{\texttt{#1}}
\expandafter\ifx\csname urlstyle\endcsname\relax
  \providecommand{\doi}[1]{DOI: #1}\else
  \providecommand{\doi}{DOI: \begingroup \urlstyle{rm}\Url}\fi

\bibitem[Abbasi-Yadkori \& Szepesv{\'a}ri(2011)Abbasi-Yadkori and
  Szepesv{\'a}ri]{abbasi2011regret}
Yasin Abbasi-Yadkori and Csaba Szepesv{\'a}ri.
\newblock Regret bounds for the adaptive control of linear quadratic systems.
\newblock In \emph{Proceedings of the 24th Annual Conference on Learning
  Theory}, pp.\  1--26. JMLR Workshop and Conference Proceedings, 2011.

\bibitem[Abbasi-Yadkori et~al.(2011)Abbasi-Yadkori, P{\'a}l, and
  Szepesv{\'a}ri]{abbasi2011improved}
Yasin Abbasi-Yadkori, D{\'a}vid P{\'a}l, and Csaba Szepesv{\'a}ri.
\newblock Improved algorithms for linear stochastic bandits.
\newblock \emph{Advances in neural information processing systems}, 24, 2011.

\bibitem[Abeille \& Lazaric(2017{\natexlab{a}})Abeille and
  Lazaric]{abeille2017linear}
Marc Abeille and Alessandro Lazaric.
\newblock Linear thompson sampling revisited.
\newblock In \emph{Artificial Intelligence and Statistics}, pp.\  176--184.
  PMLR, 2017{\natexlab{a}}.

\bibitem[Abeille \& Lazaric(2017{\natexlab{b}})Abeille and
  Lazaric]{abeille2017thompson}
Marc Abeille and Alessandro Lazaric.
\newblock Thompson sampling for linear-quadratic control problems.
\newblock In \emph{Artificial intelligence and statistics}, pp.\  1246--1254.
  PMLR, 2017{\natexlab{b}}.

\bibitem[Abeille \& Lazaric(2018)Abeille and Lazaric]{abeille2018improved}
Marc Abeille and Alessandro Lazaric.
\newblock Improved regret bounds for thompson sampling in linear quadratic
  control problems.
\newblock In \emph{International Conference on Machine Learning}, pp.\  1--9.
  PMLR, 2018.

\bibitem[Abeille \& Lazaric(2020)Abeille and Lazaric]{abeille2020efficient}
Marc Abeille and Alessandro Lazaric.
\newblock Efficient optimistic exploration in linear-quadratic regulators via
  lagrangian relaxation.
\newblock In \emph{International Conference on Machine Learning}, pp.\  23--31.
  PMLR, 2020.

\bibitem[Agrawal \& Goyal(2013)Agrawal and Goyal]{agrawal2013thompson}
Shipra Agrawal and Navin Goyal.
\newblock Thompson sampling for contextual bandits with linear payoffs.
\newblock In \emph{International conference on machine learning}, pp.\
  127--135. PMLR, 2013.

\bibitem[Auer \& Ortner(2006)Auer and Ortner]{auer2006logarithmic}
Peter Auer and Ronald Ortner.
\newblock Logarithmic online regret bounds for undiscounted reinforcement
  learning.
\newblock \emph{Advances in neural information processing systems}, 19, 2006.

\bibitem[Auer et~al.(2002)Auer, Cesa-Bianchi, and Fischer]{auer2002finite}
Peter Auer, Nicolo Cesa-Bianchi, and Paul Fischer.
\newblock Finite-time analysis of the multiarmed bandit problem.
\newblock \emph{Machine learning}, 47:\penalty0 235--256, 2002.

\bibitem[Auer et~al.(2008)Auer, Jaksch, and Ortner]{auer2008near}
Peter Auer, Thomas Jaksch, and Ronald Ortner.
\newblock Near-optimal regret bounds for reinforcement learning.
\newblock \emph{Advances in neural information processing systems}, 21, 2008.

\bibitem[Azizzadenesheli et~al.(2018)Azizzadenesheli, Brunskill, and
  Anandkumar]{azizzadenesheli2018efficient}
Kamyar Azizzadenesheli, Emma Brunskill, and Animashree Anandkumar.
\newblock Efficient exploration through bayesian deep q-networks.
\newblock In \emph{2018 Information Theory and Applications Workshop (ITA)},
  pp.\  1--9. IEEE, 2018.

\bibitem[Balagopalan \& Jun(2024)Balagopalan and Jun]{balagopalan2024minimum}
Kapilan Balagopalan and Kwang-Sung Jun.
\newblock Minimum empirical divergence for sub-gaussian linear bandits.
\newblock \emph{arXiv preprint arXiv:2411.00229}, 2024.

\bibitem[Barto et~al.(1983)Barto, Sutton, and Anderson]{barto1983neuronlike}
Andrew~G Barto, Richard~S Sutton, and Charles~W Anderson.
\newblock Neuronlike adaptive elements that can solve difficult learning
  control problems.
\newblock \emph{IEEE transactions on systems, man, and cybernetics}, \penalty0
  (5):\penalty0 834--846, 1983.

\bibitem[Baudry et~al.(2023{\natexlab{a}})Baudry, Pesquerel, Degenne, and
  Maillard]{baudry2023fast}
Dorian Baudry, Fabien Pesquerel, R{\'e}my Degenne, and Odalric-Ambrym Maillard.
\newblock Fast asymptotically optimal algorithms for non-parametric stochastic
  bandits.
\newblock \emph{Advances in Neural Information Processing Systems},
  36:\penalty0 11469--11514, 2023{\natexlab{a}}.

\bibitem[Baudry et~al.(2023{\natexlab{b}})Baudry, Suzuki, and
  Honda]{baudry2023general}
Dorian Baudry, Kazuya Suzuki, and Junya Honda.
\newblock A general recipe for the analysis of randomized multi-armed bandit
  algorithms.
\newblock \emph{arXiv preprint arXiv:2303.06058}, 2023{\natexlab{b}}.

\bibitem[Bellemare et~al.(2016)Bellemare, Srinivasan, Ostrovski, Schaul,
  Saxton, and Munos]{bellemare2016unifying}
Marc Bellemare, Sriram Srinivasan, Georg Ostrovski, Tom Schaul, David Saxton,
  and Remi Munos.
\newblock Unifying count-based exploration and intrinsic motivation.
\newblock \emph{Advances in neural information processing systems}, 29, 2016.

\bibitem[Bertsekas(2012)]{bertsekas2012dynamic}
Dimitri Bertsekas.
\newblock \emph{Dynamic programming and optimal control: Volume I}, volume~4.
\newblock Athena scientific, 2012.

\bibitem[Bian \& Jun(2022)Bian and Jun]{bian2022maillard}
Jie Bian and Kwang-Sung Jun.
\newblock Maillard sampling: Boltzmann exploration done optimally.
\newblock In \emph{International Conference on Artificial Intelligence and
  Statistics}, pp.\  54--72. PMLR, 2022.

\bibitem[Bian \& Tan(2024)Bian and Tan]{bian2024indexed}
Jie Bian and Vincent~YF Tan.
\newblock Indexed minimum empirical divergence-based algorithms for linear
  bandits.
\newblock \emph{arXiv preprint arXiv:2405.15200}, 2024.

\bibitem[Boone \& Maillard(2025)Boone and Maillard]{boone2025regret}
Victor Boone and Odalric-Ambrym Maillard.
\newblock The regret lower bound for communicating markov decision processes.
\newblock \emph{arXiv preprint arXiv:2501.13013}, 2025.

\bibitem[Bourel et~al.(2020)Bourel, Maillard, and Talebi]{bourel2020tightening}
Hippolyte Bourel, Odalric Maillard, and Mohammad~Sadegh Talebi.
\newblock Tightening exploration in upper confidence reinforcement learning.
\newblock In \emph{International Conference on Machine Learning}, pp.\
  1056--1066. PMLR, 2020.

\bibitem[Bradbury et~al.(2018)Bradbury, Frostig, Hawkins, Johnson, Leary,
  Maclaurin, Necula, Paszke, Vander{P}las, Wanderman-{M}ilne, and
  Zhang]{jax2018github}
James Bradbury, Roy Frostig, Peter Hawkins, Matthew~James Johnson, Chris Leary,
  Dougal Maclaurin, George Necula, Adam Paszke, Jake Vander{P}las, Skye
  Wanderman-{M}ilne, and Qiao Zhang.
\newblock {JAX}: composable transformations of {P}ython+{N}um{P}y programs,
  2018.
\newblock URL \url{http://github.com/jax-ml/jax}.

\bibitem[Burda et~al.(2018)Burda, Edwards, Storkey, and
  Klimov]{burda2018exploration}
Yuri Burda, Harrison Edwards, Amos Storkey, and Oleg Klimov.
\newblock Exploration by random network distillation.
\newblock \emph{arXiv preprint arXiv:1810.12894}, 2018.

\bibitem[Burnetas \& Katehakis(1996)Burnetas and
  Katehakis]{burnetas1996optimal}
Apostolos~N Burnetas and Michael~N Katehakis.
\newblock Optimal adaptive policies for sequential allocation problems.
\newblock \emph{Advances in Applied Mathematics}, 17\penalty0 (2):\penalty0
  122--142, 1996.

\bibitem[Burnetas \& Katehakis(1997)Burnetas and
  Katehakis]{burnetas1997optimal}
Apostolos~N Burnetas and Michael~N Katehakis.
\newblock Optimal adaptive policies for markov decision processes.
\newblock \emph{Mathematics of Operations Research}, 22\penalty0 (1):\penalty0
  222--255, 1997.

\bibitem[Cohen et~al.(2018)Cohen, Hasidim, Koren, Lazic, Mansour, and
  Talwar]{cohen2018online}
Alon Cohen, Avinatan Hasidim, Tomer Koren, Nevena Lazic, Yishay Mansour, and
  Kunal Talwar.
\newblock Online linear quadratic control.
\newblock In \emph{International Conference on Machine Learning}, pp.\
  1029--1038. PMLR, 2018.

\bibitem[Curi et~al.(2020)Curi, Berkenkamp, and Krause]{curi2020efficient}
Sebastian Curi, Felix Berkenkamp, and Andreas Krause.
\newblock Efficient model-based reinforcement learning through optimistic
  policy search and planning.
\newblock \emph{Advances in Neural Information Processing Systems},
  33:\penalty0 14156--14170, 2020.

\bibitem[Fazel et~al.(2018)Fazel, Ge, Kakade, and Mesbahi]{fazel2018global}
Maryam Fazel, Rong Ge, Sham Kakade, and Mehran Mesbahi.
\newblock Global convergence of policy gradient methods for the linear
  quadratic regulator.
\newblock In \emph{International conference on machine learning}, pp.\
  1467--1476. PMLR, 2018.

\bibitem[Hinrichsen \& Pritchard(1986)Hinrichsen and
  Pritchard]{hinrichsen1986stability}
Diederich Hinrichsen and Anthony~J Pritchard.
\newblock Stability radius for structured perturbations and the algebraic
  riccati equation.
\newblock \emph{Systems \& Control Letters}, 8\penalty0 (2):\penalty0 105--113,
  1986.

\bibitem[Honda \& Takemura(2010)Honda and Takemura]{honda2010asymptotically}
Junya Honda and Akimichi Takemura.
\newblock An asymptotically optimal bandit algorithm for bounded support
  models.
\newblock In \emph{COLT}, pp.\  67--79. Citeseer, 2010.

\bibitem[Honda \& Takemura(2011)Honda and Takemura]{honda2011asymptotically}
Junya Honda and Akimichi Takemura.
\newblock An asymptotically optimal policy for finite support models in the
  multiarmed bandit problem.
\newblock \emph{Machine Learning}, 85:\penalty0 361--391, 2011.

\bibitem[Honda \& Takemura(2015)Honda and Takemura]{honda2015non}
Junya Honda and Akimichi Takemura.
\newblock Non-asymptotic analysis of a new bandit algorithm for semi-bounded
  rewards.
\newblock \emph{J. Mach. Learn. Res.}, 16:\penalty0 3721--3756, 2015.

\bibitem[Kargin et~al.(2022)Kargin, Lale, Azizzadenesheli, Anandkumar, and
  Hassibi]{pmlr-v178-kargin22a}
Taylan Kargin, Sahin Lale, Kamyar Azizzadenesheli, Animashree Anandkumar, and
  Babak Hassibi.
\newblock Thompson sampling achieves $\tilde{O}(\sqrt{T})$ regret in linear
  quadratic control.
\newblock In Po-Ling Loh and Maxim Raginsky (eds.), \emph{Proceedings of Thirty
  Fifth Conference on Learning Theory}, volume 178 of \emph{Proceedings of
  Machine Learning Research}, pp.\  3235--3284. PMLR, 02--05 Jul 2022.

\bibitem[Kaufmann et~al.(2012)Kaufmann, Korda, and Munos]{kaufmann2012thompson}
Emilie Kaufmann, Nathaniel Korda, and R{\'e}mi Munos.
\newblock Thompson sampling: An asymptotically optimal finite-time analysis.
\newblock In \emph{International conference on algorithmic learning theory},
  pp.\  199--213. Springer, 2012.

\bibitem[Ladosz et~al.(2022)Ladosz, Weng, Kim, and Oh]{ladosz2022exploration}
Pawel Ladosz, Lilian Weng, Minwoo Kim, and Hyondong Oh.
\newblock Exploration in deep reinforcement learning: A survey.
\newblock \emph{Information Fusion}, 85:\penalty0 1--22, 2022.

\bibitem[Laffey et~al.(2002)Laffey, Shorten, and Cairbre]{laffey2002stability}
Thomas Laffey, Robert Shorten, and Fiacre~O Cairbre.
\newblock On the stability of convex sums of rank-1 perturbed matrices.
\newblock In \emph{Proceedings of the 2002 American Control Conference (IEEE
  Cat. No. CH37301)}, volume~2, pp.\  1246--1247. IEEE, 2002.

\bibitem[Lai \& Robbins(1985)Lai and Robbins]{lai1985asymptotically}
Tze~Leung Lai and Herbert Robbins.
\newblock Asymptotically efficient adaptive allocation rules.
\newblock \emph{Advances in applied mathematics}, 6\penalty0 (1):\penalty0
  4--22, 1985.

\bibitem[Lale et~al.(2022)Lale, Azizzadenesheli, Hassibi, and
  Anandkumar]{lale2022reinforcement}
Sahin Lale, Kamyar Azizzadenesheli, Babak Hassibi, and Animashree Anandkumar.
\newblock Reinforcement learning with fast stabilization in linear dynamical
  systems.
\newblock In \emph{International Conference on Artificial Intelligence and
  Statistics}, pp.\  5354--5390. PMLR, 2022.

\bibitem[Lattimore \& Szepesvari(2017)Lattimore and
  Szepesvari]{lattimore2017end}
Tor Lattimore and Csaba Szepesvari.
\newblock The end of optimism? an asymptotic analysis of finite-armed linear
  bandits.
\newblock In \emph{Artificial Intelligence and Statistics}, pp.\  728--737.
  PMLR, 2017.

\bibitem[Lin \& Antsaklis(2009)Lin and Antsaklis]{lin2009stability}
Hai Lin and Panos~J Antsaklis.
\newblock Stability and stabilizability of switched linear systems: a survey of
  recent results.
\newblock \emph{IEEE Transactions on Automatic control}, 54\penalty0
  (2):\penalty0 308--322, 2009.

\bibitem[Maran et~al.(2025)Maran, Metelli, Papini, and
  Restelli]{maran2025local}
Davide Maran, Alberto~Maria Metelli, Matteo Papini, and Marcello Restelli.
\newblock Local linearity: the key for no-regret reinforcement learning in
  continuous mdps.
\newblock \emph{Advances in Neural Information Processing Systems},
  37:\penalty0 75986--76029, 2025.

\bibitem[Mete et~al.(2022)Mete, Singh, and Kumar]{mete2022augmented}
Akshay Mete, Rahul Singh, and PR~Kumar.
\newblock Augmented rbmle-ucb approach for adaptive control of linear quadratic
  systems.
\newblock \emph{Advances in Neural Information Processing Systems},
  35:\penalty0 9302--9314, 2022.

\bibitem[Osband \& Van~Roy(2017)Osband and Van~Roy]{osband2017posterior}
Ian Osband and Benjamin Van~Roy.
\newblock Why is posterior sampling better than optimism for reinforcement
  learning?
\newblock In \emph{International conference on machine learning}, pp.\
  2701--2710. PMLR, 2017.

\bibitem[Osband et~al.(2013)Osband, Russo, and Van~Roy]{osband2013more}
Ian Osband, Daniel Russo, and Benjamin Van~Roy.
\newblock (more) efficient reinforcement learning via posterior sampling.
\newblock \emph{Advances in Neural Information Processing Systems}, 26, 2013.

\bibitem[Osband et~al.(2016)Osband, Blundell, Pritzel, and
  Van~Roy]{osband2016deep}
Ian Osband, Charles Blundell, Alexander Pritzel, and Benjamin Van~Roy.
\newblock Deep exploration via bootstrapped dqn.
\newblock \emph{Advances in neural information processing systems}, 29, 2016.

\bibitem[Pesquerel \& Maillard(2022)Pesquerel and Maillard]{pesquerel2022imed}
Fabien Pesquerel and Odalric-Ambrym Maillard.
\newblock Imed-rl: Regret optimal learning of ergodic markov decision
  processes.
\newblock \emph{Advances in Neural Information Processing Systems},
  35:\penalty0 26363--26374, 2022.

\bibitem[Pesquerel et~al.(2021)Pesquerel, Saber, and
  Maillard]{pesquerel2021stochastic}
Fabien Pesquerel, Hassan Saber, and Odalric-Ambrym Maillard.
\newblock Stochastic bandits with groups of similar arms.
\newblock \emph{Advances in Neural Information Processing Systems},
  34:\penalty0 19461--19472, 2021.

\bibitem[Puterman(2014)]{puterman2014markov}
Martin~L Puterman.
\newblock \emph{Markov decision processes: discrete stochastic dynamic
  programming}.
\newblock John Wiley \& Sons, 2014.

\bibitem[Saber \& Maillard(2024)Saber and Maillard]{saber2024bandits}
Hassan Saber and Odalric-Ambrym Maillard.
\newblock Bandits with multimodal structure.
\newblock In \emph{Reinforcement Learning Conference}, volume~1, pp.\ ~39,
  2024.

\bibitem[Saber et~al.(2021)Saber, M{\'e}nard, and Maillard]{saber2021indexed}
Hassan Saber, Pierre M{\'e}nard, and Odalric-Ambrym Maillard.
\newblock Indexed minimum empirical divergence for unimodal bandits.
\newblock \emph{Advances in Neural Information Processing Systems},
  34:\penalty0 7346--7356, 2021.

\bibitem[Saber et~al.(2024)Saber, Pesquerel, Maillard, and
  Talebi]{saber2024logarithmic}
Hassan Saber, Fabien Pesquerel, Odalric-Ambrym Maillard, and Mohammad~Sadegh
  Talebi.
\newblock Logarithmic regret in communicating mdps: Leveraging known dynamics
  with bandits.
\newblock In \emph{Asian Conference on Machine Learning}, pp.\  1167--1182.
  PMLR, 2024.

\bibitem[Sekar et~al.(2020)Sekar, Rybkin, Daniilidis, Abbeel, Hafner, and
  Pathak]{sekar2020planning}
Ramanan Sekar, Oleh Rybkin, Kostas Daniilidis, Pieter Abbeel, Danijar Hafner,
  and Deepak Pathak.
\newblock Planning to explore via self-supervised world models.
\newblock In \emph{International conference on machine learning}, pp.\
  8583--8592. PMLR, 2020.

\bibitem[Stewart \& Sun(1990)Stewart and Sun]{stewart1990matrix}
G.W. Stewart and J.~Sun.
\newblock \emph{Matrix Perturbation Theory}.
\newblock Computer Science and Scientific Computing. Elsevier Science, 1990.
\newblock ISBN 9780126702309.

\bibitem[Thompson(1933)]{thompson1933likelihood}
William~R Thompson.
\newblock On the likelihood that one unknown probability exceeds another in
  view of the evidence of two samples.
\newblock \emph{Biometrika}, 25\penalty0 (3-4):\penalty0 285--294, 1933.

\bibitem[Tu \& Recht(2018)Tu and Recht]{tu2018least}
Stephen Tu and Benjamin Recht.
\newblock Least-squares temporal difference learning for the linear quadratic
  regulator.
\newblock In \emph{International Conference on Machine Learning}, pp.\
  5005--5014. PMLR, 2018.

\bibitem[Tu \& Recht(2019)Tu and Recht]{tu2019gap}
Stephen Tu and Benjamin Recht.
\newblock The gap between model-based and model-free methods on the linear
  quadratic regulator: An asymptotic viewpoint.
\newblock In \emph{Conference on learning theory}, pp.\  3036--3083. PMLR,
  2019.

\bibitem[Zhang et~al.(2023)Zhang, Mao, Mowlavi, Benosman, and
  Ba{\c{s}}ar]{zhang2023controlgym}
Xiangyuan Zhang, Weichao Mao, Saviz Mowlavi, Mouhacine Benosman, and Tamer
  Ba{\c{s}}ar.
\newblock Controlgym: Large-scale control environments for benchmarking
  reinforcement learning algorithms.
\newblock \emph{arXiv preprint arXiv:2311.18736}, 2023.

\end{thebibliography}
\bibliographystyle{rlj}


\begin{center}
    {\beginSupplementaryMaterials} 
\end{center}

\appendix

\section{Proofs of the main propositions}
In this section, we detail the proof of Proposition~\ref{prop:elr} that provides the form of the asymptotic per-step expected log-likelihood ratio when following a given policy with control $K$.
We then detail the proof of Proposition~\ref{prop:sub}  which provides an approximation of the cost function to be optimized in the regime of small perturbations, which yields a closed-form approximate solution.

\subsection{Asymptotic per-step expected log-likelihood ratio for LQR} \label{proof-1}

\begin{proof}[Proof of Proposition \ref{prop:elr}.]
The one-step likelihood of observing $x_{t+1}$ given $x_t$ under $\Theta$ (ignoring constants) is 
$
    \mathbf{p}(x_{t+1}\vert x_t) \propto \exp \left( -\frac{1}{2}(x_{t+1} - A_Kx_t)^\intercal \Omega^{-1} (x_{t+1} - A_Kx_t) \right).
$
We denote by $\tilde{\mathbf{p}}$ the transition probability under $\widetilde{\Theta}$. Thus the one-step likelihood ratio is
\begin{equation}
\begin{split}
    \ell_t &= \log \frac{\mathbf{p}(x_{t+1}\vert x_t)}{\mathbf{\tilde{p}}(x_{t+1}\vert x_t)} \\
    &= \frac{1}{2} \left ( (x_{t+1} - \widetilde{A}_K x_t)^\intercal\Omega^{-1}(x_{t+1} - \widetilde{A}_K x_t) - (x_{t+1} - A_K x_t)^\intercal\Omega^{-1}(x_{t+1} - A_K x_t) \right ) \\
    &= \frac{1}{2} \bigg( \left ( (A_K - \widetilde{A}_K) x_t + w_t\right)^\intercal \Omega^{-1} \left ( (A_K - \widetilde{A}_K) x_t + w_t\right) - w_t^\intercal\Omega^{-1}w_t\bigg)\\
    &= \frac{1}{2} \left ( x_t^\intercal (A_K - \widetilde{A}_K)^\intercal \Omega^{-1} (A_K - \widetilde{A}_K)x_t + 2 w_t^\intercal \Omega^{-1}(A_K - \widetilde{A}_K)x_t\right ),
\end{split}
\end{equation}
taking the expectation, the second term vanishes, and we have
\begin{equation}
\begin{split}
    \mathbb{E}_{\Theta}[\ell_t]&= \frac{1}{2} \mathbb{E}_\Theta \left [ x_t^\intercal (A_K - \widetilde{A}_K)^\intercal \Omega^{-1} (A_K - \widetilde{A}_K)x_t \right] \\
    &= \frac{1}{2} \operatorname{Tr} \left( (A_K - \widetilde{A}_K)^\intercal \Omega^{-1} (A_K - \widetilde{A}_K) \Sigma_K(\Theta) \right ),
\end{split}
\end{equation}
where the stationary distribution $\Sigma_K(\Theta) = \mathbb E_\Theta \left [x_tx_t^\intercal \right| K] = \Omega + A_K \Sigma_K(\Theta) A_K^\intercal$, 
satisfies a discrete-time Lyapunov equation. For a trajectory $\tau$ of $T$ steps, the total expected log-likelihood ratio is
\begin{equation}
    \mathbb{E}_\Theta \left [ \log \frac{\mathbf{p}(\tau)}{\tilde{\mathbf{p}}(\tau)} \right ] = \sum^T_{t =1} \mathbb E_\Theta [\ell_t] = \frac{T}{2} \operatorname{Tr} \left( (A_K - \widetilde{A}_K)^\intercal \Omega^{-1} (A_K - \widetilde{A}_K) \Sigma_K(\Theta) \right ).
\end{equation}
Taking the limit as $T \rightarrow \infty$, we see that the total expected log-likelihood ratio diverges linearly, while the per-step average converges to
\begin{equation}
    {\bf d}_K(\Theta \Vert \widetilde{\Theta}) = \lim_{T \rightarrow \infty} \frac{1}{T} \mathbb{E}_\Theta \left [ \log \frac{\mathbf{p}(\tau)}{\tilde{\mathbf{p}}(\tau)} \right ] = \frac{1}{2} \operatorname{Tr} \left( (A_K - \widetilde{A}_K)^\intercal \Omega^{-1} (A_K - \widetilde{A}_K) \Sigma_K(\Theta) \right ).
\end{equation}
\end{proof}

\subsection{Sub-optimality cost refinement under small perturbations} \label{proof-2}

\begin{proof}[Proof of Proposition \ref{prop:sub}]
We begin by expressing the cost for the perturbed system $J_K(\Theta(\alpha))$, as
\begin{equation}
 \sigma_w^2\,\operatorname{Tr}\bigl(P_K(\Theta(\alpha))\bigr)
= \sigma_w^2\,\mathbf{i}^\top\operatorname{vec}\bigl(P_K(\Theta(\alpha))\bigr) = \sigma_w^2\,\mathbf{i}^\top\Bigl(I_{d^2}-A_K^\top(\alpha)\otimes A_K^\top(\alpha)\Bigr)^{-1}\mathbf{q}_K,
\end{equation}
with $\mathbf{i}= \operatorname{vec}(I_d)$ and $\mathbf{q}_K=\operatorname{vec}(Q_K)$. The closed-loop dynamics for the interpolated system is
\begin{equation}
A_K(\alpha)=A-BK+\alpha\bigl(\Delta_A+\Delta_BK\bigr)=A_K+\alpha\,\Delta_K.
\end{equation}
Its Kronecker square naturally expands as a quadratic function of $\alpha$,
\begin{equation}
A_K^\top(\alpha)\otimes A_K^\top(\alpha)
= X_K+\alpha\,\overline{X}_K+\alpha^2\,\overline{\overline{X}}_K,
\end{equation}
where $X_K=A_K^\top\otimes A_K^\top$, $\overline{X}_K = (A_K\otimes \Delta_K+\Delta_K\otimes A_K)^\intercal$ and $\overline{\overline{X}}_K = (\Delta_K\otimes \Delta_K)^\intercal$. Thus, the inverse appearing in the cost can be written in terms of perturbation, as
\begin{equation}
\Bigl(I_{d^2}-A_K^\top(\alpha)\otimes A_K^\top(\alpha)\Bigr)^{-1}
=\Bigl(I_{d^2}-X_K-\widetilde{X}_K(\alpha)\Bigr)^{-1},
\end{equation}
with $\widetilde{X}_K(\alpha)=\alpha\,\overline{X}_K+\alpha^2\,\overline{\overline{X}}_K$. Assuming that the perturbations are small, we apply a first-order expansion of the infinite series, as described in Section 2.2.4 of \cite{stewart1990matrix}, to obtain
\begin{equation}
\Bigl(I_{d^2}-X_K-\widetilde{X}_K(\alpha)\Bigr)^{-1}\approx Y_K-Y_K\,\widetilde{X}_K(\alpha)\,Y_K,
\end{equation}
where $Y_K=(I_{d^2}-X_K)^{-1}$. For clarity, we introduce the scalar coefficients $p_K=\mathbf{i}^\top Y_K\,\mathbf{q}_K$, $\overline{p}_K=\mathbf{i}^\top Y_K\,\overline{X}_K\,Y_K\,\mathbf{q}_K$, and $\overline{\overline{p}}_K=\mathbf{i}^\top Y_K\,\overline{\overline{X}}_K\,Y_K\,\mathbf{q}_K$.
Hence, the cost function is simplified to
\begin{equation}
\mathbf{i}^\top\Bigl(I_{d^2}-A_K^\top(\alpha)\otimes A_K^\top(\alpha)\Bigr)^{-1}\mathbf{q}_K
\approx p_K-\alpha\,\overline{p}_K+\alpha^2\,\overline{\overline{p}}_K.
\end{equation}

Repeating the derivation for another gain $K'$ and equating the two expressions for $\mathcal{L}(\alpha)$ leads to
\begin{equation}
\begin{gathered}
(p_K-p_{K'})-\alpha\,(\overline{p}_K-\overline{p}_{K'})+\alpha^2\,(\overline{\overline{p}}_K-\overline{\overline{p}}_{K'})=0, \\
\boxed{
\alpha=\frac{(\overline{p}_K-\overline{p}_{K'})\pm\sqrt{(\overline{p}_K-\overline{p}_{K'})^2-4(\overline{\overline{p}}_K-\overline{\overline{p}}_{K'})(p_K-p_{K'})}}
{2(\overline{\overline{p}}_K-\overline{\overline{p}}_{K'})}.
}
\end{gathered}
\end{equation}
Choosing the positive solution completes the derivation. Finally, using the identities $
    \operatorname{vec}(AXB) = (B^\intercal\otimes A)\operatorname{vec}(X)$, and
    $\operatorname{vec}(I_d)^\intercal \operatorname{vec}(X) = \operatorname{Tr}(X),
$ and the Neumann series expansion,
Kronecker products and vectorizations simplify and complete the proof.
\end{proof}

\end{document}